\newtheorem{theorem}{Theorem}
\newtheorem{proposition}[theorem]{Proposition}
\newtheorem{example}{Example}
\newtheorem{definition}{Definition}
\definecolor{orange}{RGB}{255,127,0}
\definecolor{brown}{RGB}{150,70,0}
\definecolor{green}{RGB}{127,255,127}
\definecolor{darkgreen}{RGB}{0,127,0}
\definecolor{blue}{RGB}{127,127,255}
\definecolor{lightblue}{RGB}{150,150,255}
\definecolor{darkblue}{RGB}{0,0,127}
\definecolor{red}{RGB}{255,90,90}
\definecolor{grey}{RGB}{127,127,127}
\definecolor{pink}{RGB}{255,180,180}
\newcommand{\comment}[1]{}
\newcommand{\xaxis}{$x$-axis\xspace}
\newcommand{\yaxis}{$y$-axis\xspace}
\newcommand{\JROC}{JROC\xspace}
\title{Test cost and misclassification cost trade-off using reframing}
\author
{
	Celestine Periale Maguedong-Djoumessi\\
	{\normalsize\em DSIC, Universitat Polit\`ecnica de Val\`encia, Spain}\\
	{\normalsize \tt cemadj@gmail.com}\\
$\:$\\
	Jos\'{e} Hern\'{a}ndez-Orallo\\
	{\normalsize\em DSIC, Universitat Polit\`ecnica de Val\`encia, Spain}\\
	{\normalsize \tt jorallo@dsic.upv.es}
}
\date{}
\begin{document}

\maketitle

{
\abstract Many solutions to cost-sensitive classification (and regression) rely on some or all of the following assumptions: we have complete knowledge about the cost context at training time, we can easily re-train whenever the cost context changes, and we have technique-specific methods (such as cost-sensitive decision trees) that can take advantage of that information. In this paper we address the problem of selecting models and minimising joint cost (integrating both misclassification cost and test costs) without any of the above assumptions. We introduce methods and plots (such as the so-called \JROC plots) that can work with any off-the-shelf predictive technique, including ensembles, such that we reframe the model to use the appropriate subset of attributes (the {\em feature configuration}) during deployment time. In other words, models are trained with the available attributes (once and for all) and then deployed by setting missing values on the attributes that are deemed ineffective for reducing the joint cost. As the number of feature configuration combinations grows exponentially with the number of features we introduce quadratic methods that are able to approximate the optimal configuration and model choices, as shown by the experimental results.  \\
{\bf Keywords}: test cost, misclassification cost, missing values, reframing, ROC analysis, operating context, feature configuration, feature selection.
}

\section{Introduction}

The feature space (including both input and output variables) characterises a data mining problem \cite{li2001feature}. In predictive (supervised) problems, the quality and availability of features determines the predictability of the dependent variable, and the performance of data mining models in terms of misclassification or regression error. Good features, however, are usually difficult to obtain. It is usual that many instances come with missing values, either because the actual value for a given attribute was not available or because it was too expensive (e.g., in medical domains, where attributes usually correspond to diagnostic tests). This frequently represents a utility or cost-sensitive learning dilemma \cite{turney2000types,elkan2001foundations} between misclassification (or regression error) costs and tests costs, both being integrated into a joint cost.

One possible option is known as missing value imputation \cite{zhu2011missing}, but this approach is not usually appropriate when test costs are considered. First, imputing missing values 
``is regarded as unnecessary for cost-sensitive learning that also considers the test costs'' \cite{zhang2005missing}. Second, expensive attributes (e.g., in diagnosis) are usually missing for many other instances as well and it is difficulty to infer them from other instances or attributes. 

The most common option is to train models that are able to do reasonably good predictions with the available attributes. However, a more powerful approach is to find a trade-off (in terms of minimising joint cost) about how many (and which) attributes need to be used. Retraining with all the attribute subsets (possibly using feature selection)
 does not seem to be a good option, because for $n$ attributes we typically have $2^n$ possible combinations. 
As a result, one common option is to use techniques that lead to models that can use any subset of attributes. Decision trees are the usual choice \cite{ling2004decision,zhang2005missing,lomax2013survey} because the use of attributes can be customised in many different ways. Similarly, we could also try ---if not already done--- to design cost-sensitive versions of many other families of techniques, such as Bayesian models, neural networks, logistic regression, kernel methods, etc., with varying success. This would lead to two problems. One one hand, we would need to have a library of specific cost-sensitive algorithms for classification and regression, which would also limit our range of options and the use of the ultimate learning techniques (until cost-sensitive versions appear and are implemented). On the other hand, even assuming that this is possible, we would require some tools to properly select which model is better, as we do not know in advance (in training time) what the misclassification (or regression error) cost and test cost context will be during deployment. In fact, each instance may have a different subset of missing values and a different cost context, so any choice performed during the training stage will be specific and biased.

In this paper, we explore an alternative, more general approach that can use off-the-shelf machine learning methods. The procedure is simple: we use any data mining technique that accepts missing values during training and prediction and learn a predictive model with our training data as usual. Then, 
we evaluate the model (on a validation dataset) by exploring the lattice of attribute subsets, with a very straightforward mechanism: we set missing values on purpose for each combination in the lattice. As a result, we know how well our model behaves for any attribute subset. 
From here, once the model needs to be deployed on unlabelled data, and whenever a new instance appears (with a possibly particular cost context) we decide which attributes the model requires to get the lowest expected joint cost\footnote{Given an example with some non-missing and some missing values we may decide increase the number of non-missing values. Given a case for which we have not still retrieved any of the attributes (tests) we decide how many (and which) attributes we are going to ask for.}. This is done by calculating the expected joint cost for each point in the lattice. In this sense, each prediction is associated with a possibly different operating condition, and the best attribute subset is chosen.

Interestingly, we can use the previous approach for more than one model, and see that some models dominate for some operating conditions over the rest. This is exactly the way ROC analysis works (for classification \cite{SDM00,flach2003decision,rocai2004,Fawcett06,Mamitsuka2006} and for regression \cite{RROC2012}). We will introduce graphical plots and procedures to make this selection and also to reduce the number of combinations in the lattice that need to be explored in order to make a good selection. 

The goal of the paper is then to introduce new methods to make optimal choices in terms of joint cost (i.e., considering both misclassification and test costs) when using off-the-shelf data mining models. An optimal choice is understood as selecting the right model with the right subset of attributes. 
In this paper we will focus on classification, but many of the ideas could be extended to regression as well.

Section \ref{motivation} reviews the notion of misclassification and test costs, cost context, and motivates the possibility of solving the problem of cost minimisation and model selection in a different way. 
Section \ref{reframing} proposes the idea of reframing an existing model to a new cost context by setting (or letting) some of the attributes as missing. Several examples show that good results can be obtained by setting most of the attributes missing during deployment, and show that the approach can be applied to any kind of predictive technique, including ensembles.
Section \ref{tradeoff} introduces a more effective way of analysing and finding the trade-off between test cost ($TC$) and misclassification cost ($MC$), by plotting $TC$ on the \xaxis and $MC$ on the \yaxis, and finding the optimal feature and model configuration using isometrics on these plots. The notion of convex hull and dominance are also introduced.
After realising that there are $2^m$ feature configurations for $m$ features, section \ref{hull} explores hull approximations by performing a quadratic selection on the number of configurations that need to be explored.
Section \ref{experiments} evaluates these approximations on several datasets and cost context.
Section \ref{conclusion} closes the paper with some recommendations and take-away messages about how to use the methods and plots introduced here. Several extensions are proposed as future work.

\section{Motivation}\label{motivation}

We will focus on classification problems, characterised by a multivariate input domain $\mathbb{X}$, i.e., a tuple of elements of sets $X_1, X_2, \dots, X_m$, where $m$ is the number of features or input attributes, possibly containing the null value, and a univariate output domain $\mathbb{Y} \subset \{l_1, l_2, \dots, l_c\}$, where $c$ is the number of classes or labels of the output attribute. The domain space $\mathbb{D}$ is then $\mathbb{X} \times \mathbb{Y}$. 
Examples or instances are just pairs $\left\langle x,y \right\rangle \in \mathbb{D}$, and datasets are subsets (actually multi-sets) of $\mathbb{D}$. The length of a dataset will usually be denoted by $n$. 
A {\em crisp} or {\em hard} classification model $\hat{f}$ is a function $\hat{f}: \mathbb{X} \rightarrow \mathbb{Y}$. 
We just represent the true value by $y$ and the estimated value by $\hat{y}$. Subindices will be used when referring to more than one example in a dataset. Given an example $i$, the values of the $m$ input attributes are denoted by $x_{i,1},  x_{i,2}, \dots, x_{i,m}$. 
%
Throughout the paper we will use several classifiers from Weka \cite{weka}. In this paper we are especially interested in using the techniques as they are, being able to use techniques that are, in principle, inattentive to the use of all the attributes, such as kernel methods, ensembles, etc. In particular, we will use SMO (a support vector machine), IBk (a k-nearest neighbour), J48 (a decision tree), Adaboost (an ensemble method with J48 decision trees) and Bagging (an example method with J48 decision trees). All of them will be used with their default parameters. 

Once this common setting for classification is set, we may wonder how models are created and deployed. In fact, models are usually learned under some contextual information but possibly deployed several times under changing conditions.
Reuse of learned models is of critical importance in the majority of knowledge-intensive application areas, particularly because the operating context can be expected to vary from training to deployment and we need to make the best decision according to that context \cite{SDM00,flach2003decision}. One kind of context is related to the way inputs (i.e., features) can vary from training to deployment. Among these changes, we can mention two important ones: attributes may not be available (missing values) or may have different test costs. Another type of context depends on how class distribution and misclassification costs affect the output variable. Note that these context changes may happen for each problem instance individually. For instance, in a medical domain, some tests may not be applicable to some patients (as can be contraindicated or risky), and other tests may be more or less expensive depending on the patient (her insurance policy). Also, for the output variable, a wrong diagnosis usually has asymmetric costs, as a false negative is usually worse (and economically more expensive in the long term) than a false positive. Again, these costs may be different for each example.

These two types of costs (test costs and misclassification costs) are highly intertwined. In fact, as Turney \cite{turney2000types} points out, we can only rationally determine whether it is worthwhile to pay the cost of test when we know the cost of misclassification errors. If the cost of misclassification errors is much greater than the cost of tests, then it is rational to purchase all tests that seem to have some predictive value. But if the cost of misclassification errors is much less than the cost of tests, then it is not rational to purchase any tests.

Let us define these types of cost formally:

\begin{definition}\label{def:M}
A misclassification cost function is any function $M:{\mathbb{Y}} \times {\mathbb{Y}} \rightarrow \mathbb{R}$ which compares elements in the output domain. For convenience, the first argument will be the estimated value, and the second argument the actual value. 
\end{definition}
As ${\mathbb{Y}}$ is a discrete set, typically we refer to $M$ as the misclassification cost {\em matrix}. We will assume that the diagonal of the matrix is zero (i.e., $\forall y \:: M(y,y) = 0$) and that the other elements of the matrix are greater than or equal to 0.

We can have a different matrix for each example, denoted by $M_i$. 
From above, we define the misclassification cost $MC$ of an example $i$ as $MC_i \triangleq M_i(\hat{y}_i,y_i)$. 
Only when the matrix is the same for all the examples, we can just calculate the average $MC$ as the Frobenius product between the confusion matrix for the whole dataset and the cost matrix, divided by $n$.

\begin{definition}\label{def:T}
The test cost vector is a real vector of size $m$, i.e., $(t_1, t_2, \dots, t_m)$, where $m$ is the number of attributes. 
The test cost function $T_j$ is any function as follows:
\begin{eqnarray*}
T_j(x) \triangleq \left\{   \begin{array}{l l}
                                             t_j    & \quad \text{if $x$ is not null}\\
                                             0      & \quad \text{otherwise}
                       \end{array} \right.																	
\end{eqnarray*}						
\end{definition}

We can have a different test cost function for each example and attribute, denoted by $T_{i,j}$. 
From above, we define the test cost $TC$ of an example $i$ as $TC_i \triangleq \sum_{j=1}^m T_{i,j}(x_{i,j})$. 
Only when $T_{i,j}$ are independent of the example $i$ we can just calculate the average $TC$ as the dot product between the use vector (how many times each attribute has been used for the dataset) and the test cost vector, divided by $n$.

We want to integrate both the misclassification cost and the test cost in one single measure of cost:

\begin{definition}\label{def:JC}
The {\em joint} cost for example $i$ is:
\begin{eqnarray*}
JC_i \triangleq \alpha \cdot MC_i + (1-\alpha) \cdot TC_i															
\end{eqnarray*}						
with $\alpha \in [0,1]$.
\end{definition}
The value $\alpha$ will be better explained later on, but clearly sets more relevance to misclassification or test costs. If $\alpha = 1$ only the misclassification cost matters, and if $\alpha = 0$  only the test cost matters.
$M$, $T$ and $\alpha$ configure the {\em cost context} or {\em operating condition}. With $m$ attributes and $c$ classes, there are $m+c(c-1)-1$ degrees of freedom (assuming the cost matrix has a zero diagonal).

\begin{example}\label{ex:1}
Consider the iris dataset \cite{UCIrep2013}, created by R.A. Fisher, which is composed of four attributes: $SL$, $SW$, $PL$ and $PW$ and three classes: {\em setosa}, {\em versicolour} and {\em virginica}. 

Assume that we have an example where the test cost vector is $(3, 2, 10, 5)$ and the misclassification cost matrix $M$ is defined as follows:

\begin{center}
{\center
\begin{tabular}{c|ccc}
            & setosa & versicolour & virginica \\ \hline
setosa      & 0      &         20  & 15        \\
versicolour & 5      &         0   & 15        \\
virginica   & 30     &         15  & 0         \\
\end{tabular}
}
\end{center}

\noindent where columns represent the actual value and rows the predicted value. 
Consider also that we have three models to be applied to the same instance. Model 1 requires attributes $SL$ and $PL$ and predicts $virginica$, model 2 requires attributes $SL$ and $SW$ and predicts $setosa$, and model 3 requires all attributes and predicts $versicolour$. 
If the true label is $versicolour$, then we have $JC = MC + TC = 15 + (3 + 10) = 28$ for model 1, $JC = MC + TC = 20 + (3 + 2) = 25$ for model 2, and $JC = MC + TC = 0 + (3 + 2 + 10 + 5) = 20$ for model 3. 
\end{example}

In the previous example, model 3 is better than the other two for this example. Of course, in general, we need to make the decision of which model to use without knowing the actual label, and that will depend on the reliability of the models and the class frequencies. This is then a decision problem that can be solved by determining the model with lowest expected cost.

Interestingly, we may wonder what would happen if we removed attribute $PW$ for model 3. Even if we are told that model 3 was trained to work with that attribute, it is not difficult to guess what the model could do without it and still give a prediction. As we will see in more detail below, there are (at least) two ways to do it. First, we could set $PW$ to null (i.e., make it missing) and see what happens. Second, we could consider as range values for the attribute and get the most frequently predicted class. 
Clearly, none of these methods actually requires the attribute but allows us to use model 3.
Imagine that, by using any of these two methods, model 3 still predicts $versicolour$. Our cost would have been lowered down to $15$.

So, the question we want to address in this paper is not only what model to choose but also the subset of attributes that we will use (`buy'). How can we analyse this problem systematically? Can we use any kind of technique in machine learning, statistics and data mining, including ensembles, kernel methods, etc., where the tests costs are originally high.

\section{Reframing the model with missing values on purpose}\label{reframing}

There has been an extensive work in the past decades on how the performance of a predictive technique evolves with different feature subsets. This is the core of feature selection techniques. In fact, model performance can even be increased by using a subset of the original attributes. Also, if we think about costs, most works on minimising costs have taken this approach \cite{ling2004decision,zhang2005missing,lomax2013survey}.

However, we can also consider that the model has already been trained (with possibly all the attributes) and we may just want to apply the model with fewer available attributes, e.g., when missing values appear or when we cannot afford `buying' some of the tests included in the model. It is important to say that we consider models that may have been developed by experts or by automated predictive analysis tools, or both. Re-training can be a bad choice on many occasions: when we have an expert (human-made) model, when we are using ensembles or other techniques with high training costs, when the training data is no longer available, or when the cost context changes recurrently, even for each example.

What can we do instead of re-training? 
What we do is to {\em reframe} the original model to a situation with fewer attributes, a different {\em feature configuration}. But, how do models behave when we remove attributes from them?

First, we need to clarify how we can get predictions from a model that takes $m$ attributes when we only provide $m' < m$. There are two possible ways of reframing a model in order to do this: 

\begin{enumerate}
\item Setting the attribute to null. Many models can just work with missing values for test instances. However, on some occasions the model cannot take null values (e.g., logistic regression is usually one of these techniques). Nonetheless, it highly depends on the implementation of the technique (or the model).
\item \label{item-range} Instead, we can invent or negotiate over the attribute  \cite{bella2011using}. This means that if it is a nominal attribute, we can just ask the model to give a prediction for all the possible values for the attribute, get the predictions, and calculate the most frequently predicted class. If it is a numerical attribute, we can just use a sampling or discretisation and then behave similarly. If we have information about the attribute value distribution, we can also use it, as in missing value imputation.
\end{enumerate}

\noindent This second approach is more powerful (and related to missing value imputation and feature selection). 
In fact, on occasions, we may even realise that we get the same prediction for whatever value of the attribute (i.e., this is said to be a non-negotiable attribute in terms of \cite{bella2011using}) so we can clearly save the cost of getting the value for this attribute. 
However, for simplicity, we will work with the first way, as using a null value works for many DM/ML techniques and libraries, without further modification of our models. In our case, it just worked smoothly with Weka \cite{weka}.

\begin{figure}
\centering
\includegraphics[width=0.48\textwidth]{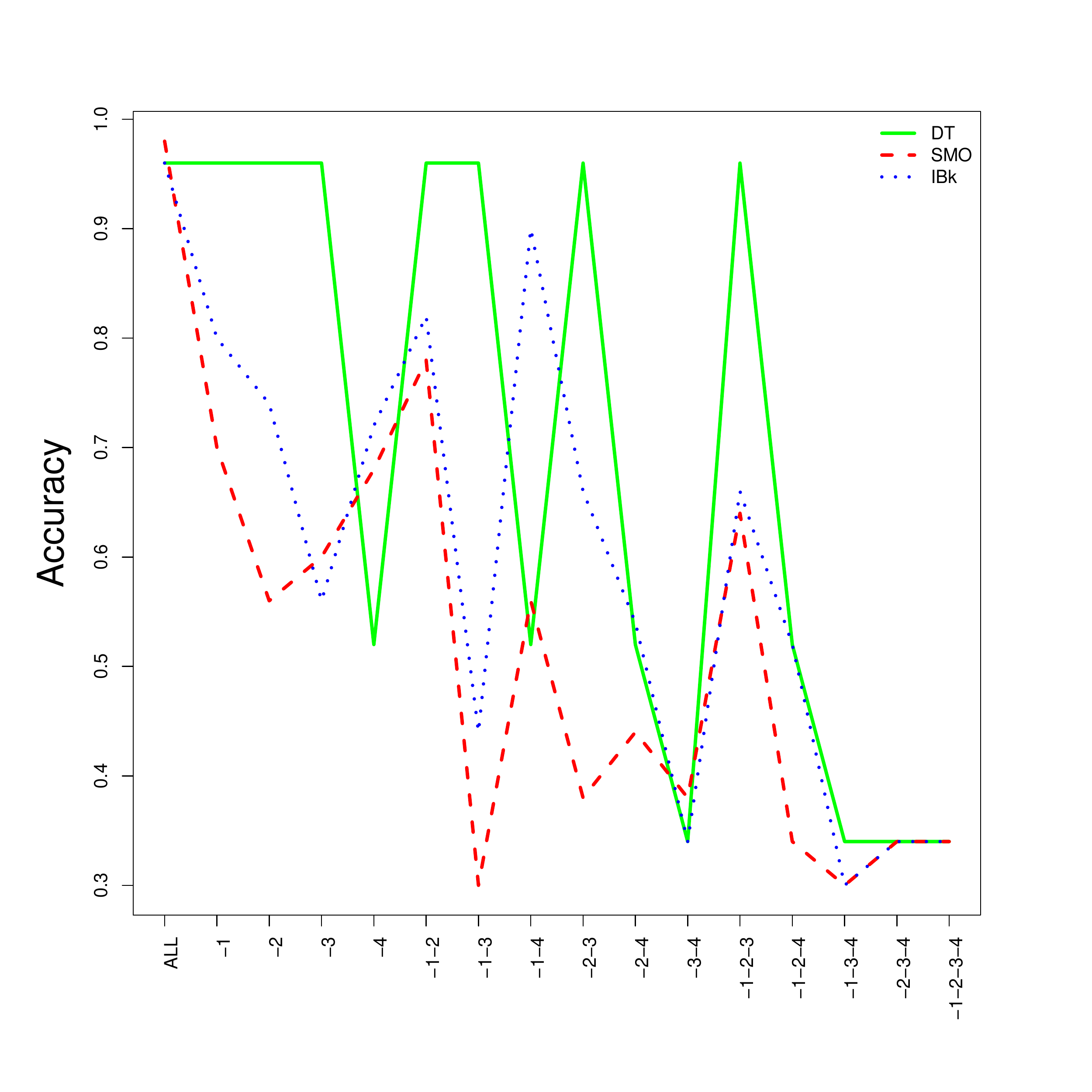} \hfill
\includegraphics[width=0.48\textwidth]{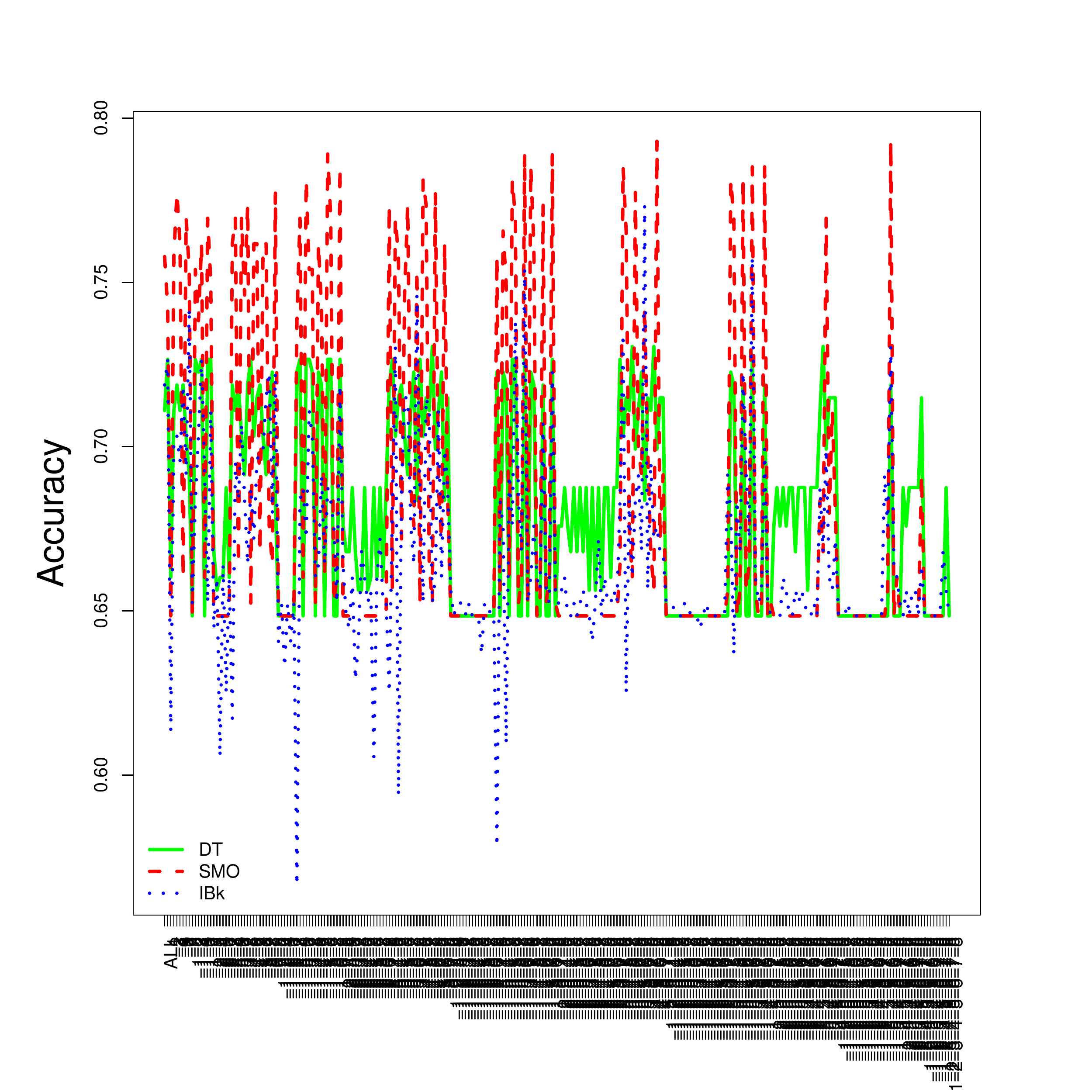} 
\caption{Evolution of accuracy according to attribute selection for three models: decision trees, SMO and kNN. The models are learned over the whole training dataset using all the attributes. Then, some attributes are removed by setting a null value on them systematically, as shown on the $\xaxis$. Left: iris dataset (with $4$ attributes and hence $2^4=16$ combinations). Right: Pima Indian diabetes dataset (with $8$ attributes and hence $2^8=256$ combinations).}
\label{fig:accuracy}
\end{figure}

Once we know a simple procedure to reduce the attribute set, let us analyse how models behave.
Figure \ref{fig:accuracy} shows the evolution of accuracy\footnote{We show accuracy, but we could show other measures such as AUC or MSE \cite{PRL09,JMLR12}}, for all the possible subsets of the iris and the diabetes dataset (the subset lattice). Models are trained for 2/3 of the data and evaluated with the rest. We see many interesting things here. First, the general pattern is to get more accuracy as more attributes are used. But, obviously, some attributes are more important than others, leading to a sawtooth picture. Second, and more interestingly, the minimum is found at the majority-class classifier, i.e., if we are not given any information about any attribute, the best thing that we can do is to predict the majority class (or the class with lowest expected loss if misclassification costs are taken into account). Third, now surprisingly, we see that for some models and problems (Figure \ref{fig:accuracy}, right), the maximum is not obtained with all the attributes. In fact, it is obtained at several other places, one of them with six attributes removed (of the possible eight).

We can show the specific values of $MC$, $TC$ and the aggregate $JC$ for a given context of $M$, $T$ and $\alpha$. 
We will first consider a `uniform' operating context:

\begin{definition}\label{def:uniform}
The uniform operating context $\theta_U$ is defined by a uniform test cost vector $(1/m, 1/m, \dots, 1/m)$ and a uniform misclassification cost matrix 
$\forall y_1,y_2 \:: M(y_1,y_2) = c/(c-1)$ if $y_1 \neq y_2$ and 0 otherwise. Also, $\alpha=0.5$.
\end{definition}

The parameters of this context have the property that given a problem whose classes are perfectly balanced, the expected $MC$ of a random classifier is $1$ and the expected $TC$ of a classifier using all the attributes is $1$. As a consequence, $JC=1$. For this context, any model with $JC > 1$ is clearly a model to be discarded. In fact, as a random classifier does not need to use any of the attributes, any $JC > 0.5$ is also discardable for this context. It is easy to see that if $T$ is the uniform test cost vector and $M$ is the uniform misclassification cost matrix, we have that $\sum T = 1$ and $\sum M = c^2$. This property will be known as a context being normalised.

Figure \ref{fig:allcostsU} shows the evolution of $MC$, $TC$ and the aggregate $JC$ for the uniform context described above. We see that the in formation shown is very similar to that evolution of Figure \ref{fig:accuracy}. However, for other operating contexts, things might be different. Let us see this.

\begin{figure}
\centering
\includegraphics[width=0.48\textwidth]{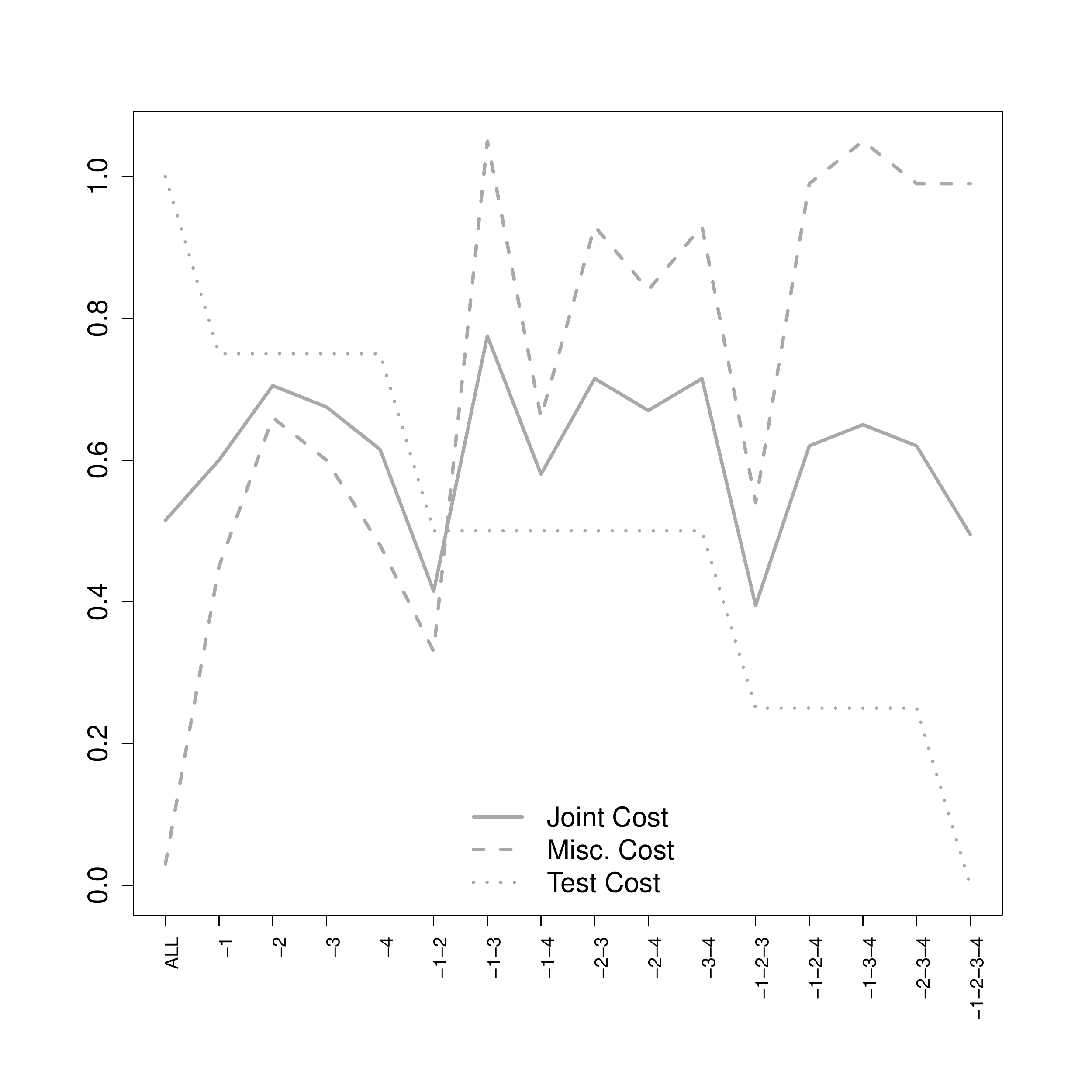} \hfill
\includegraphics[width=0.48\textwidth]{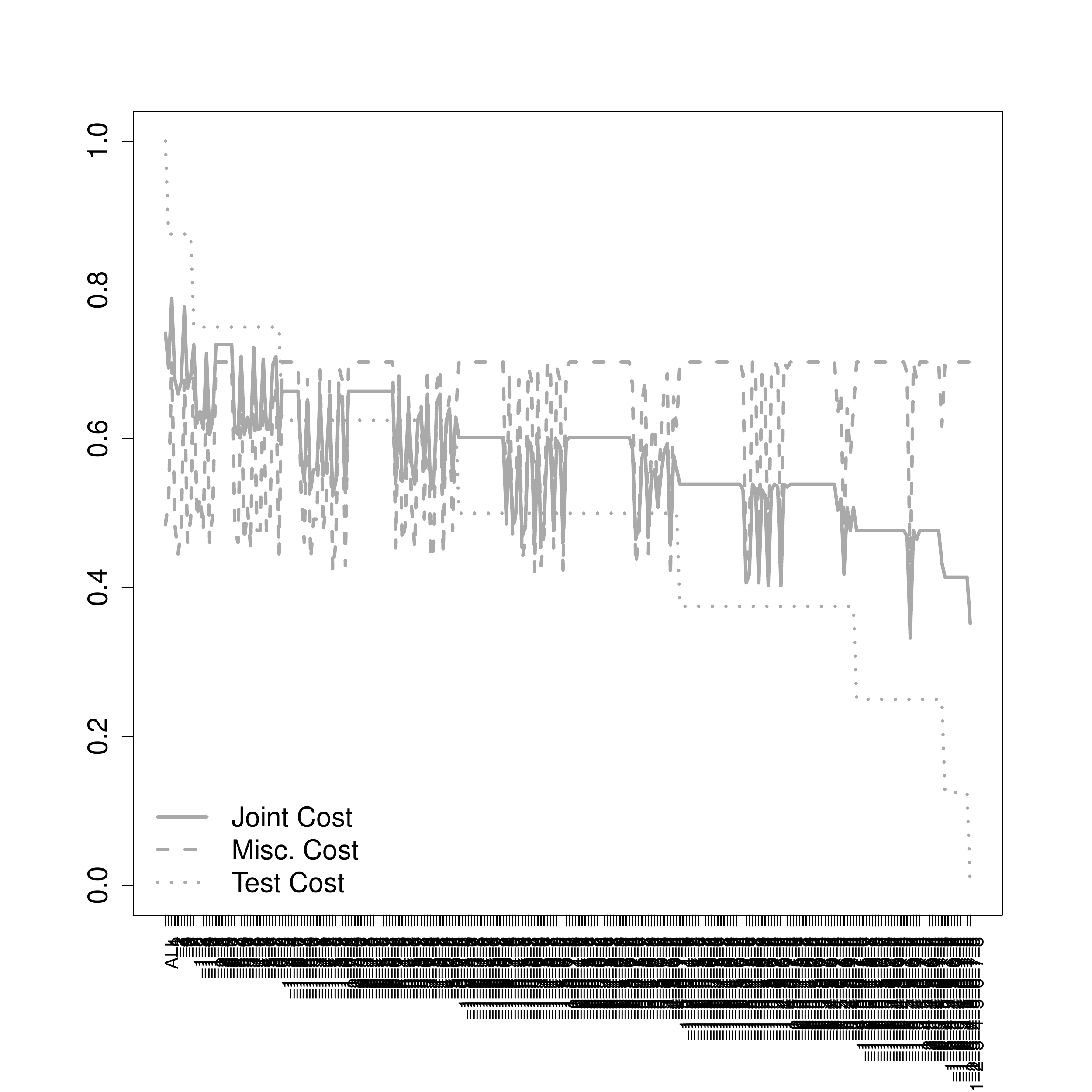} 
\caption{Evolution of $MC$, $TC$ and $JC$ according to attribute selection for a SMO (SVM) model using the uniform context (see definition \ref{def:uniform}). Left: iris dataset. The configuration which minimises the $JC$ is given by the use of only attribute 4 (removing -1-2-3) Right: Pima Indian diabetes dataset. The configuration that minimises the $TC$ is given by only two attributes (removing six).}
\label{fig:allcostsU}
\end{figure}

The new (non-uniform) operating context is defined as follows for the problems ``iris'' and ``Pima Indian diabetes''.

The operating context $\theta_1$ for ``iris'' is just the one in example \ref{ex:1}. The operating context $\theta_2$ for ``Pima Indian diabetes'' is defined as a test cost vector is $(2, 50, 5, 5, 20, 3, 10, 1)$, which means that the most expensive tests correspond to `plasma glucose concentration', `2hour serum insulin' and `diabetes pedigree function'. The misclassification cost matrix $M$ is defined as follows:

\begin{center}
{\center
\begin{tabular}{c|cc}
             & negative (0)  & positive (1) \\ \hline
negative (0) & 0             &         200          \\
positive (1) & 50            &         0           \\
\end{tabular}
}
\end{center}
\noindent where columns represent the actual value and rows the predicted value. The value of $\alpha$ is 0.5.

With these operating contexts, Figure \ref{fig:allcosts0102} shows the same plots as Figure \ref{fig:allcostsU}, with a different result.

\begin{figure}
\centering
\includegraphics[width=0.48\textwidth]{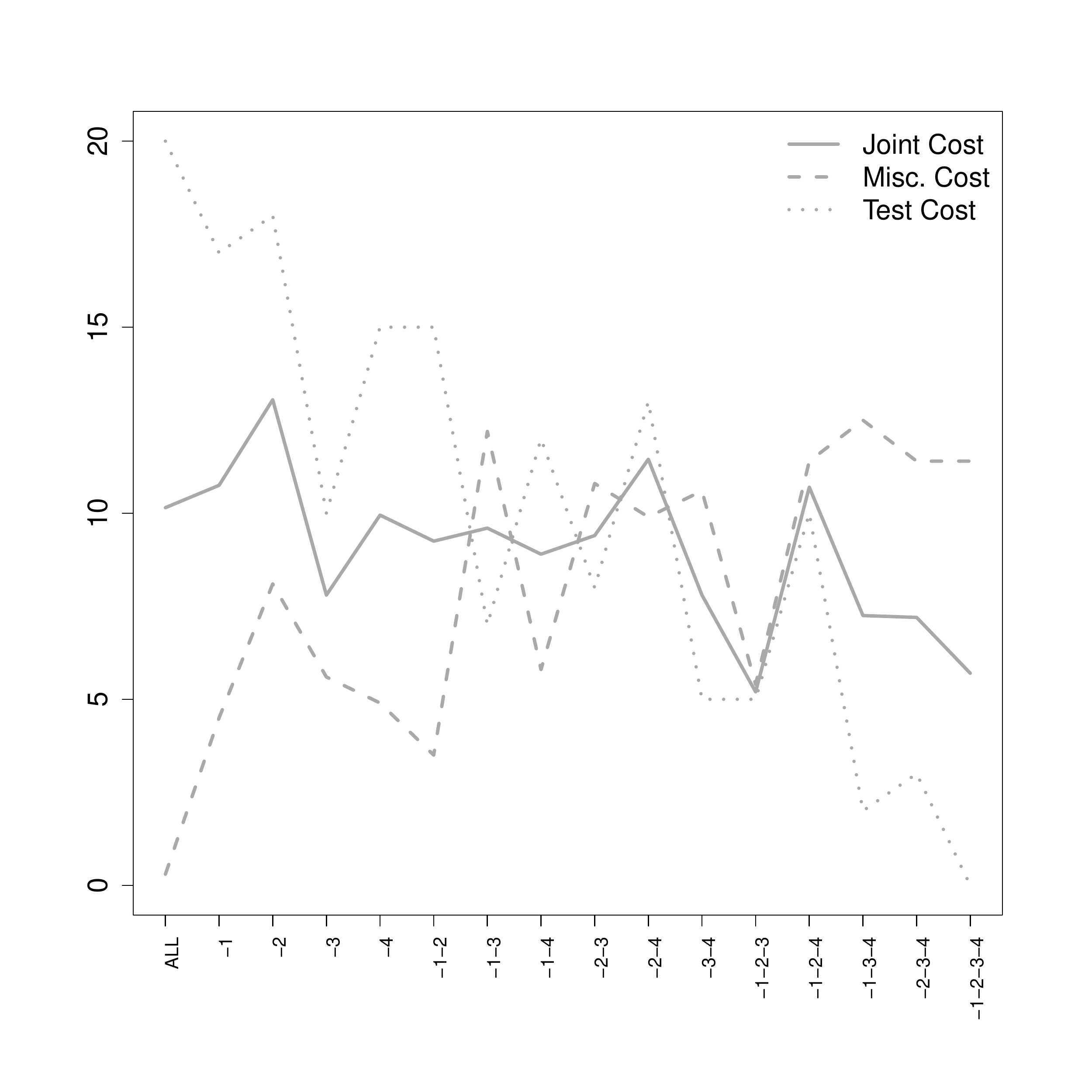} \hfill
\includegraphics[width=0.48\textwidth]{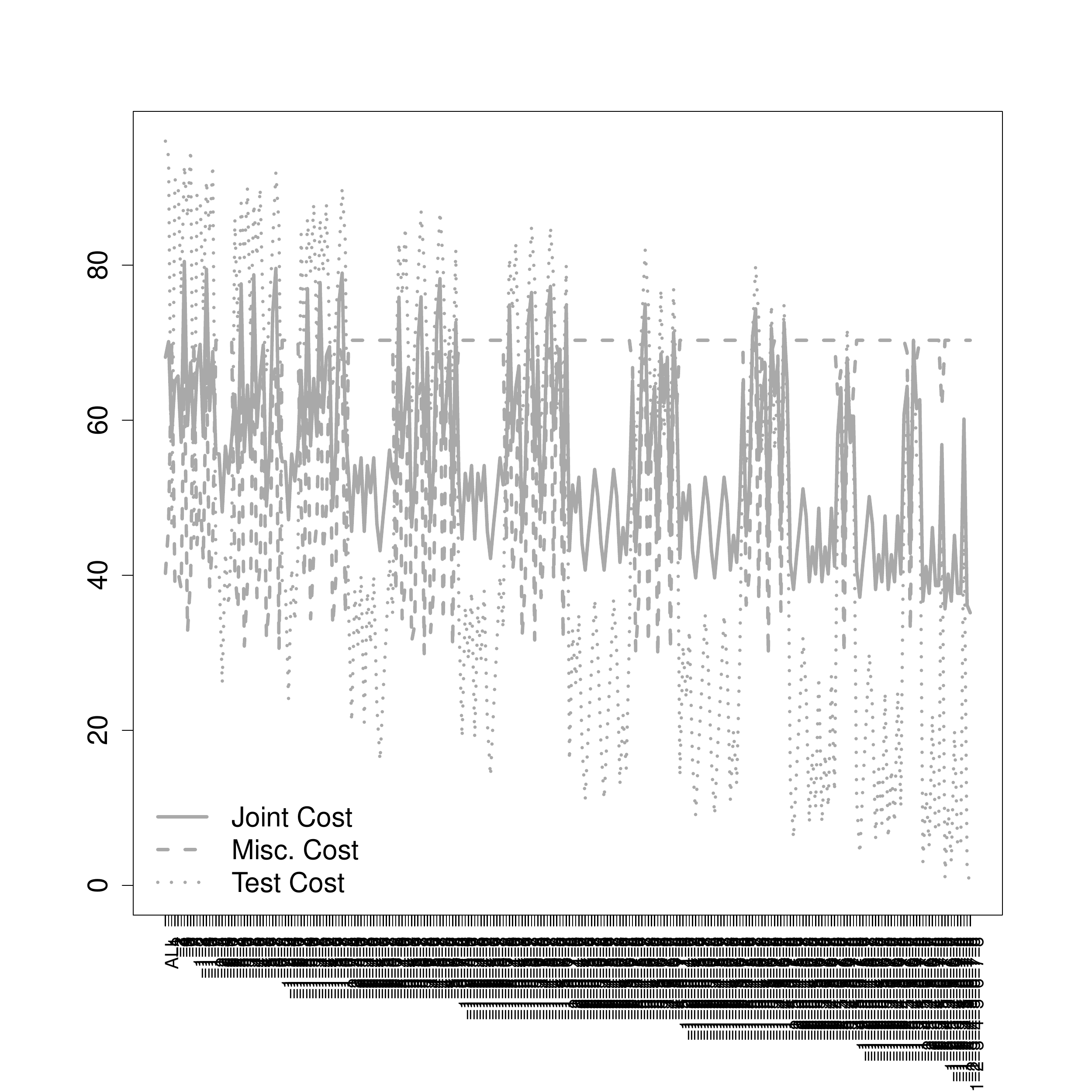} 
\caption{Evolution of $MC$, $TC$ and $JC$ according to attribute selection for a SMO (SVM) model. Left: iris dataset using context $\theta_1$. The configuration that minimises the $TC$ is given by attribute 4. Right: Pima Indian diabetes dataset using context $\theta_2$. The configuration that minimises the TC is given by removing all the attributes.}
\label{fig:allcosts0102}
\end{figure}

\section{The MC/TC trade-off: \JROC plots}\label{tradeoff}

The plots seen in the previous section are very informative for a given operating context. If the plots are drawn on a validation set, we will just choose the model and feature configuration that minimises the $JC$. However, there are some problems with the previous plots: if we have several models, the plot gets too crowded. Also, the {\em curves} are usually too sawtooth. Finally, we need to change the curves whenever we change the operating context.

While some of the above problems are difficult to solve completely, most especially because we have $m+c(c-1)-1$ degrees of freedom, we can see a more convenient alternative that minimises these problems. The alternative is based on a graphical visualisation of the MC/TC trade-off that we call {\em \JROC plots}.

\begin{definition}\label{def:jroc}
A \JROC plot shows the test cost ($TC$) on the $\xaxis$ and misclassification cost ($MC$) on the $\yaxis$.
\end{definition}

Figure \ref{fig:JROCU} shows $\JROC$ plots for iris and Pima Indian diabetes. 
For iris, as it has four attributes, we see $2^4 \times 3$ points, $2^4$ for each model.
For diabetes, as it has eight attributes, we see $2^8 \times 3$ points, $2^8$ fore each model.
Those models and configurations which go closer to the bottom left corner are better than those that are placed on the top right area of the plot. 
There is always a point with 0 $TC$ and a usually high misclassification cost, frequently matching the majority class model. 
However, as mentioned earlier on, the minimum $MC$ is not always achieved with maximum $TC$. 
In this particular case, the minimum $MC$ for diabetes is obtained with a $TC$ of 0.22.

\begin{figure}
\centering
\includegraphics[width=0.48\textwidth]{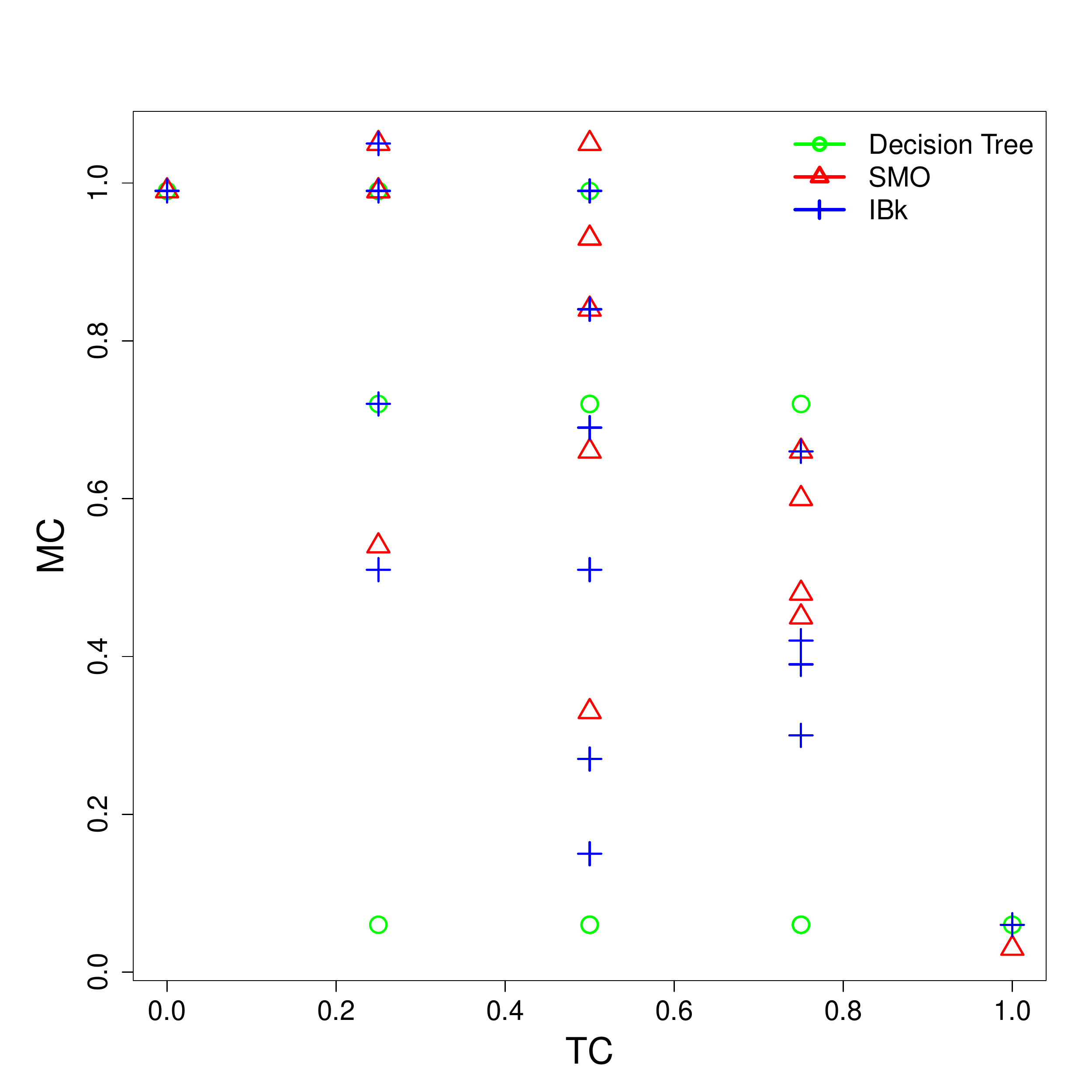} \hfill
\includegraphics[width=0.48\textwidth]{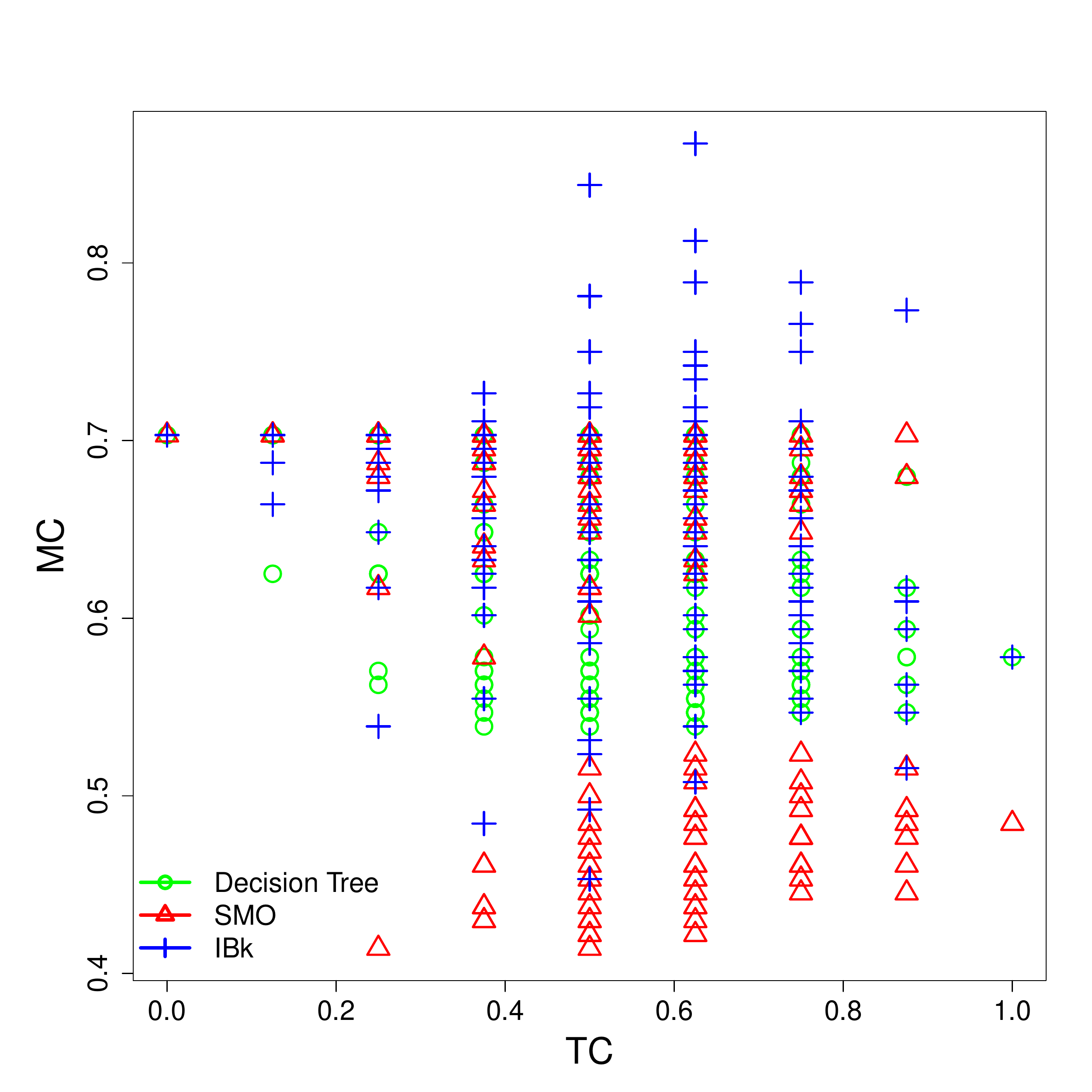} 
\caption{\JROC plots for the three models: decision trees, SMO and IBk. We use the uniform operating context $\theta_U$. We show test cost ($TC$) on the \xaxis and misclassification cost ($MC$) on the \yaxis. Left: iris dataset. Right: Pima Indian diabetes dataset. For iris we see that decision trees and kNN (IBk) perform better, as the points which are most on the bottom left are of these models. However, for diabetes, it seems that SMO and kNN get closer to the desired bottom left corner.}
\label{fig:JROCU}
\end{figure}

Figure \ref{fig:JROC0102} shows a similar plot with different cost contexts. Here, we also see how the points are now located in different places. Even though the classifiers are the same, the distribution of the points is very different.

\begin{figure}
\centering
\includegraphics[width=0.48\textwidth]{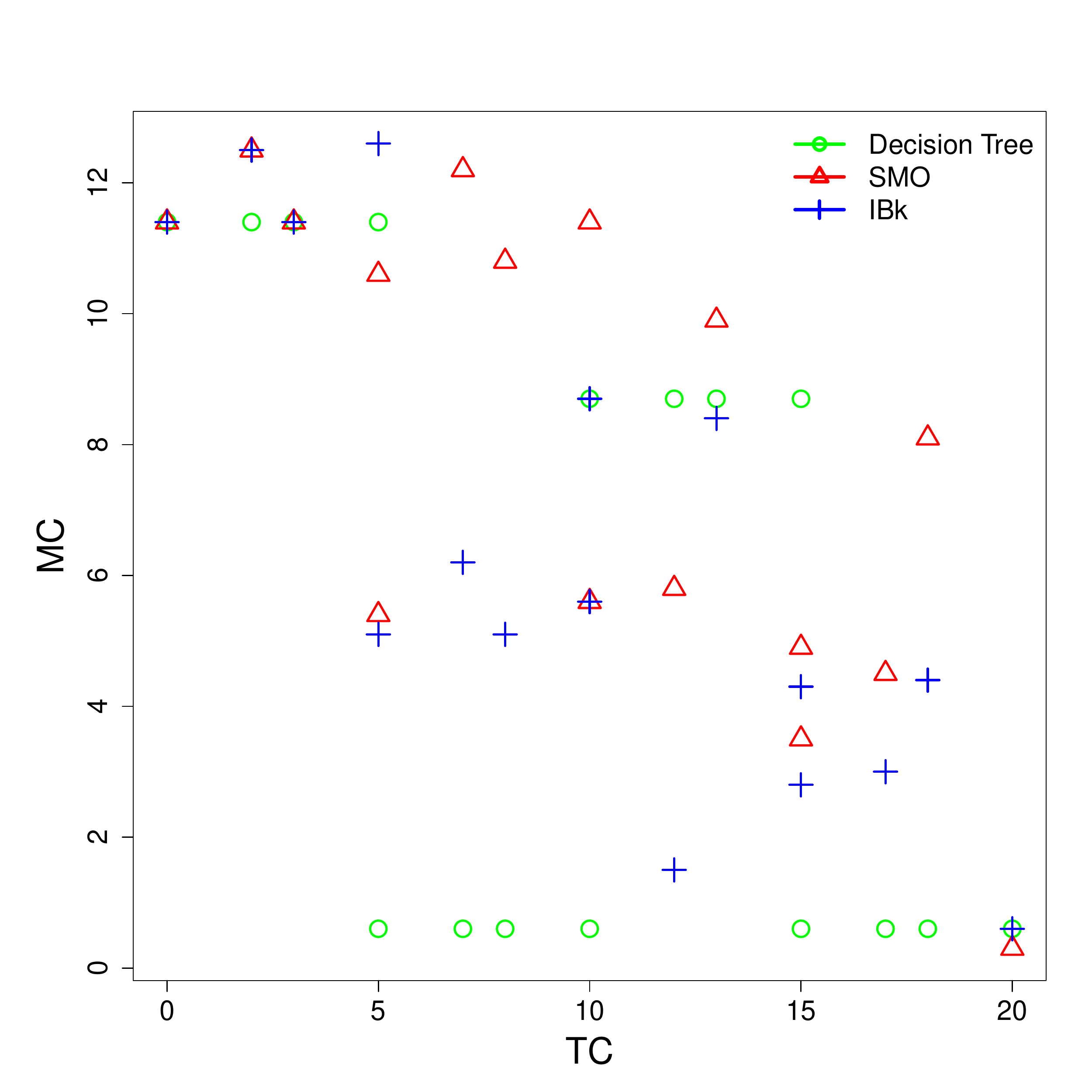} \hfill
\includegraphics[width=0.48\textwidth]{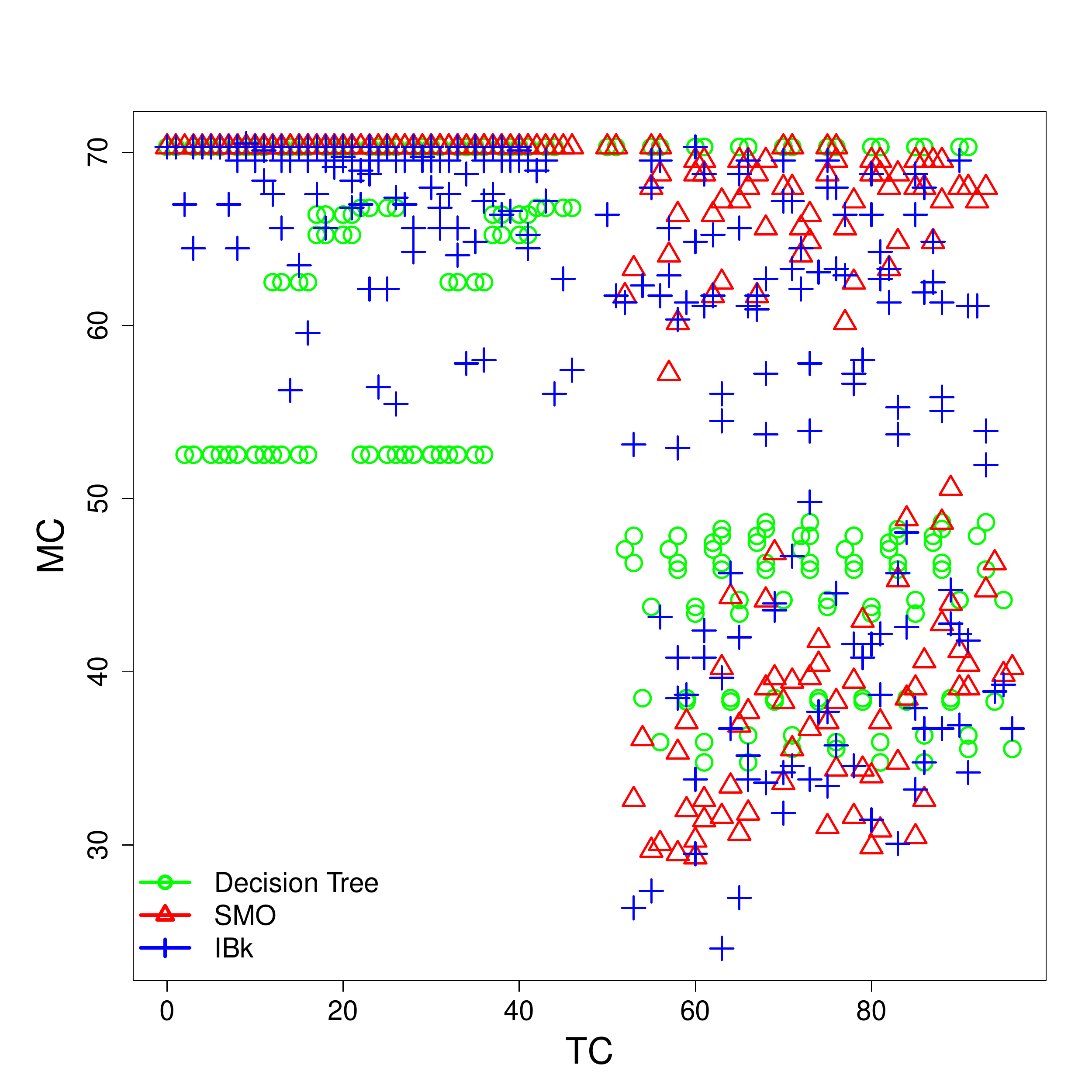} 
\caption{\JROC plots for the three models: decision trees, SMO and IBk. We show test cost ($TC$) on the \xaxis and misclassification cost ($MC$) on the \yaxis. Left: iris dataset with the operating context $\theta_1$. Right: Pima Indian diabetes dataset with the operating context $\theta_2$. Compare to Figure \ref{fig:JROCU}, which uses a different cost context.}
\label{fig:JROC0102}
\end{figure}

Intentionally, we have not shown $\alpha$ on the plots, even though, according to definition \ref{def:JC}, we cannot calculate $JC$ unless this value is fixed. The following lemma shows that the same plot can be used to calculate $JC$ for any value of $\alpha$, with the notion of cost {\em isometrics}.

\begin{proposition}
Given a value of $\alpha$ the points which are connected by a line with slope $\frac{1-\alpha}{-\alpha}$ have the same $JC$.
\end{proposition}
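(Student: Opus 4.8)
The plan is to translate the claim into the coordinate geometry of the \JROC plot and to recognise the level sets of $JC$ as straight lines. By Definition~\ref{def:jroc}, a point of the plot has coordinates $(x,y)$ with $x = TC$ and $y = MC$. Substituting these into Definition~\ref{def:JC} gives $JC = (1-\alpha)\,x + \alpha\,y$, a fixed affine function of the plotted coordinates once $\alpha$ is chosen.

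First I would fix $\alpha$ together with an arbitrary target level $k$, and consider the set of all points sharing that joint cost, i.e. the level set $\{(x,y) : (1-\alpha)\,x + \alpha\,y = k\}$. This is the equation of a straight line in the plane (the \JROC isometric for cost $k$), so any two points on it are ``connected by a line'' in the sense of the statement and share the same $JC$ by construction.

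Next I would rearrange this equation into slope--intercept form to read off the slope. Assuming $\alpha \neq 0$, solving for $y$ yields $y = \frac{k}{\alpha} - \frac{1-\alpha}{\alpha}\,x$, whose slope is $-\frac{1-\alpha}{\alpha} = \frac{1-\alpha}{-\alpha}$, exactly as claimed; the intercept $k/\alpha$ merely selects which isometric (which cost level) we are on, confirming that the slope is independent of $k$. Conversely, two points lying on a common line of this slope satisfy $(1-\alpha)\,x + \alpha\,y = k$ for a shared constant $k$, hence have equal $JC$, so the correspondence between constant-$JC$ sets and lines of this slope is exact.

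The argument is essentially a one-line computation, so there is no genuine obstacle; the only point needing a word of care is the degenerate case $\alpha = 0$, where $JC = TC$ and the isometrics are the vertical lines $x = k$. Here the formula $\frac{1-\alpha}{-\alpha}$ must be read as an infinite (vertical) slope, which is consistent: when only test cost matters, equal-cost points are precisely those with the same $TC$. The boundary case $\alpha = 1$ (horizontal isometrics, $JC = MC$) falls out of the general formula directly.
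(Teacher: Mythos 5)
Your proof is correct and follows essentially the same route as the paper's: both amount to observing that $JC$ is an affine function of the plotted coordinates $(TC,MC)$, so its iso-cost sets are straight lines of slope $\frac{1-\alpha}{-\alpha}$. The differences are cosmetic---you read the slope off the slope--intercept form of a level set, while the paper computes it as a difference quotient between two equal-cost points---and your explicit handling of the degenerate case $\alpha=0$ (vertical isometrics) is a small extra the paper omits.
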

\begin{proof}
From definition \ref{def:JC} we have that $JC = \alpha \cdot MC + (1-\alpha) \cdot TC$. 
Consequently, two points $a$ and $b$ have the same $JC$ iff
$\alpha \cdot MC_a + (1-\alpha) \cdot TC_a = \alpha \cdot MC_b + (1-\alpha) \cdot TC_b$. 
Operating with this equation, we get:
\begin{eqnarray*}
\alpha \cdot MC_a + (1-\alpha) \cdot TC_a & =  & \alpha \cdot MC_b + (1-\alpha) \cdot TC_b \\ 
MC_a + \frac{1-\alpha}{\alpha} \cdot TC_a & = &  MC_b + \frac{1-\alpha}{\alpha} \cdot TC_b \\ 
MC_a - MC_b & = & \frac{1-\alpha}{\alpha} \cdot (TC_b - TC_a)  \\ 
\frac{MC_a - MC_b}{TC_a - TC_b} & = & \frac{1-\alpha}{-\alpha}
\end{eqnarray*}
As the last expression is the change in $y$ divided by the change in $x$, the expression $\frac{1-\alpha}{\alpha}$ is the slope of this line.
\end{proof}

If $\alpha = 1$  only the misclassification cost matters and the slope is 0, and if $\alpha = 0$  only the test cost matters and the slope is infinite. It is clear that $\alpha$ only represents one of the $m+c(c-1)-1$ degrees of freedom, but it is able to consider the most important one: the relative relevance between misclassification and test costs.

As in classical ROC analysis, if we slide an isometric line given by a value of $\alpha$ from the point $(0,0)$ in the opposite direction (towards the top-right part of the plot), we will eventually find one  point (or more) on the plot. This is the best point according to the operating condition.

Figure \ref{fig:isometrics} shows three different isometrics given by operating conditions $\alpha=0.03$, $\alpha=0.5$  and $\alpha=0.9$ and where they touch on the cloud of points. As we can see, different feature configurations and models are chosen for each operating condition. 

\begin{figure}
\centering
\includegraphics[width=0.48\textwidth]{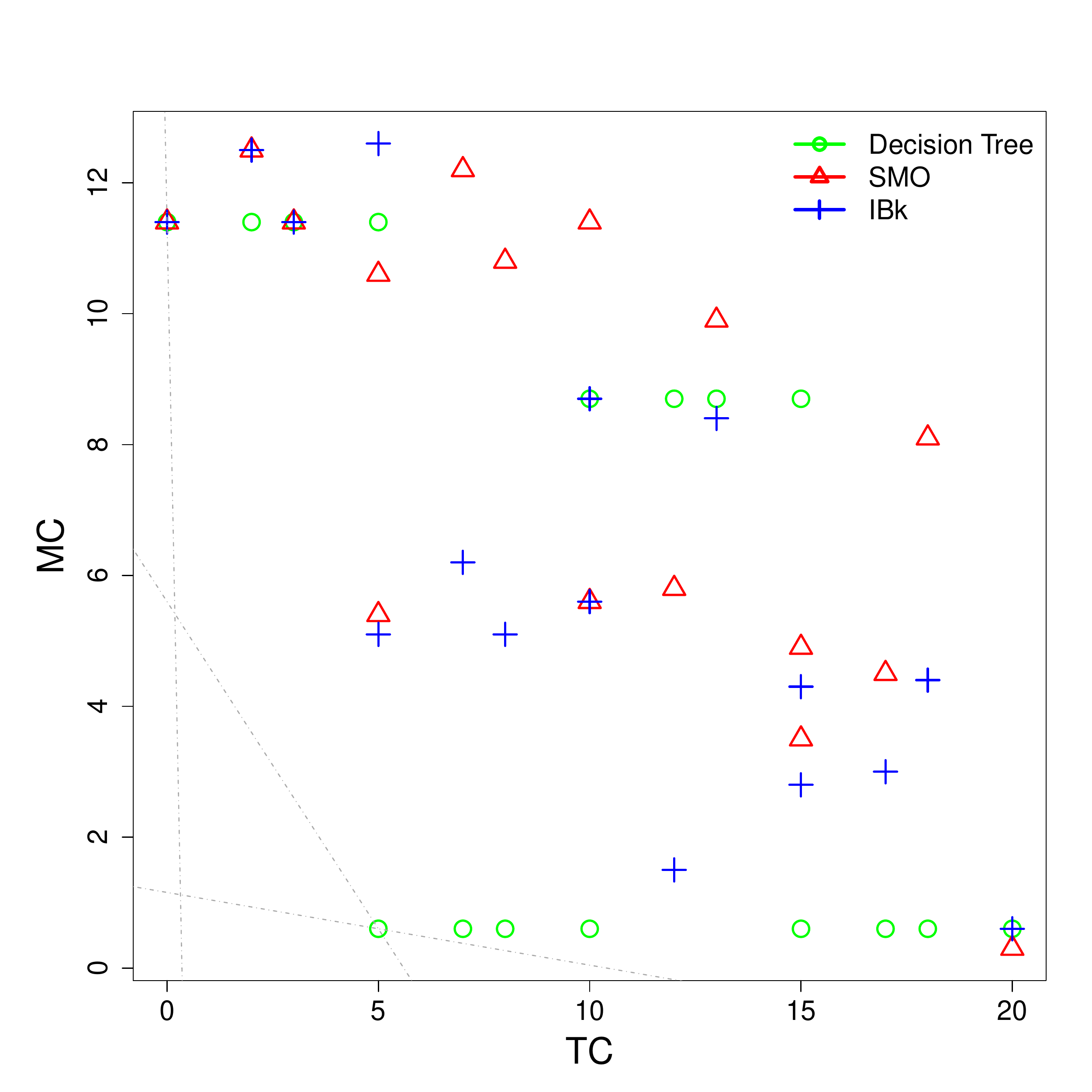} \hfill
\includegraphics[width=0.48\textwidth]{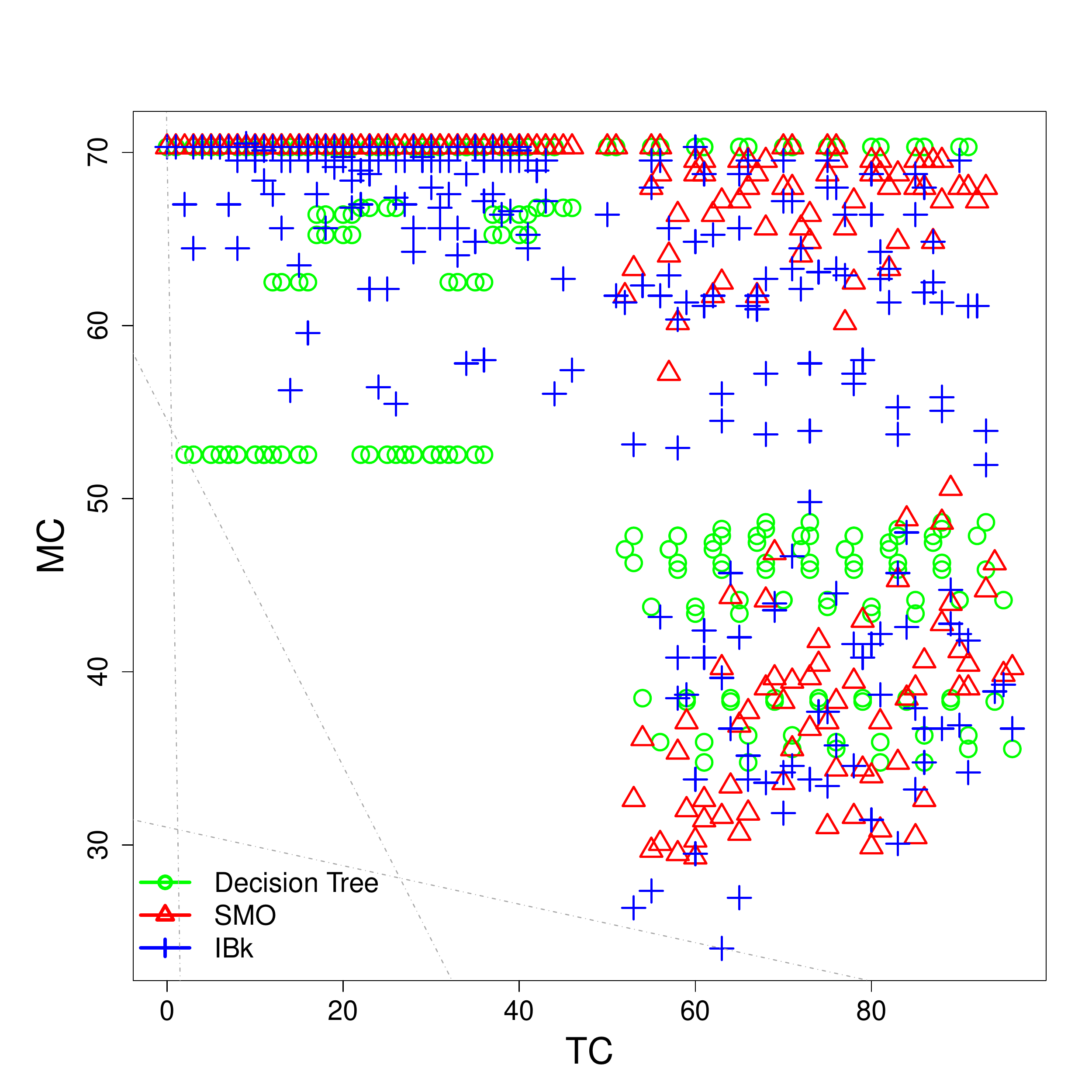} 
\caption{We show the same plots as \ref{fig:JROC0102} but now we show isometrics for operating condition $\alpha=0.03$, $\alpha=0.5$  and $\alpha=0.9$. Left: iris dataset with the operating context $\theta_1$. Right: Pima Indian diabetes dataset with the operating context $\theta_2$.}
\label{fig:isometrics}
\end{figure}

Finally, if we consider all possible values of $\alpha \in [0,1]$ we see that some points are never chosen. This is exactly the notion of convex hull:

\begin{definition}\label{def:jroc-hull}
A \JROC convex hull of a model is the convex hull of the set of points on the \JROC space that are defined using all the attribute subsets.
\end{definition}

\begin{figure}
\centering
\includegraphics[width=0.48\textwidth]{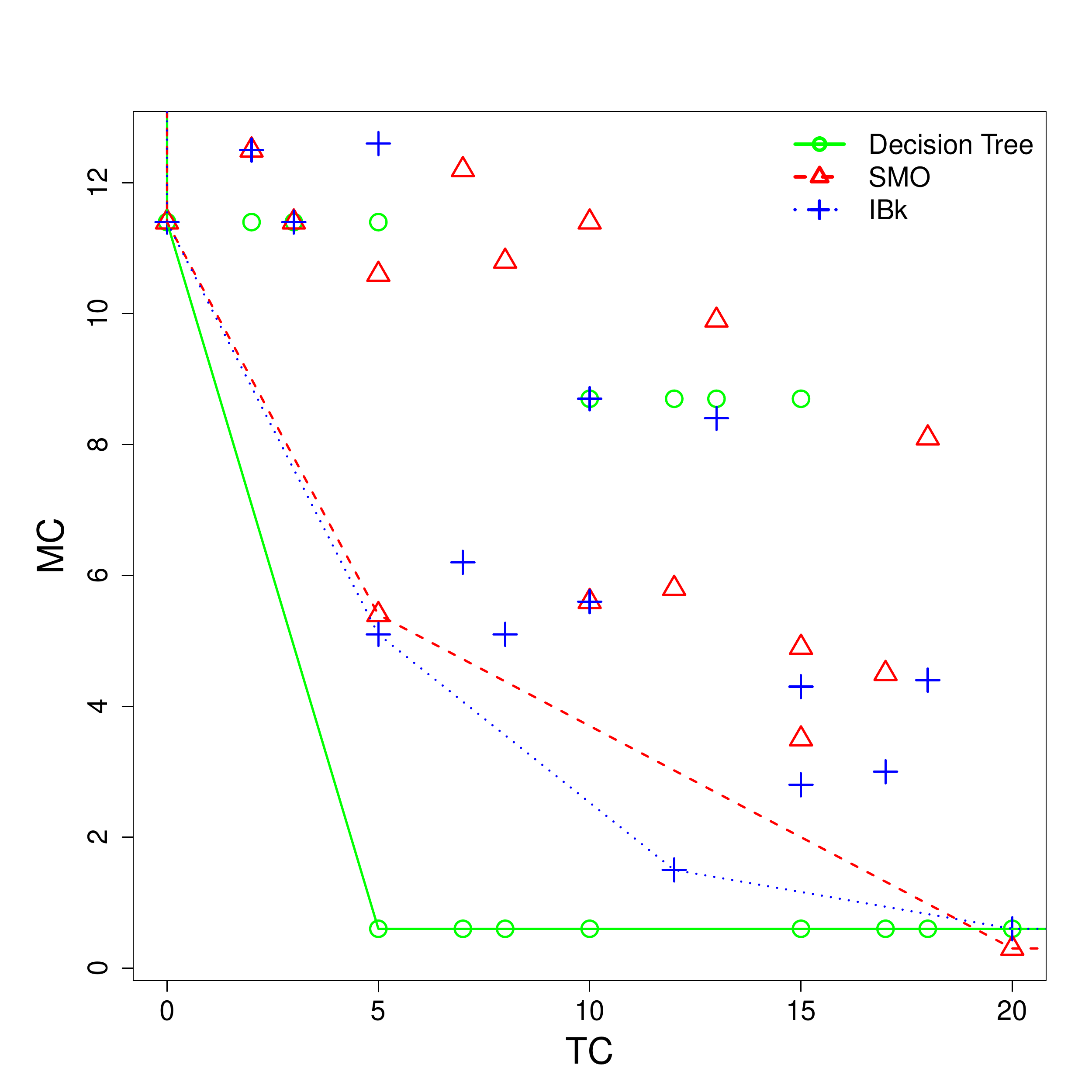} \hfill
\includegraphics[width=0.48\textwidth]{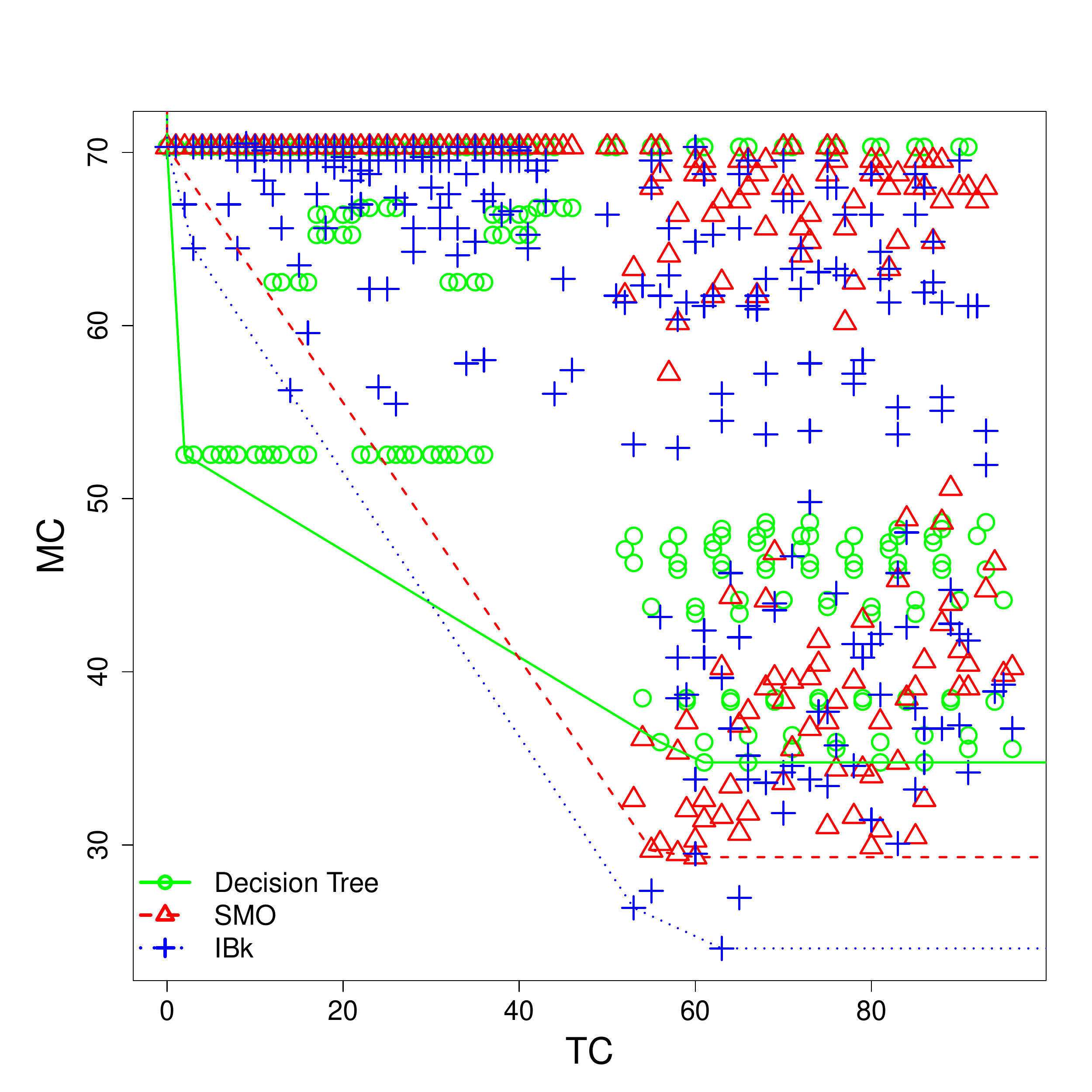} 
\caption{We show the same plots as \ref{fig:JROC0102} but now the convex hulls for the three models are also drawn. Left: iris dataset with the operating context $\theta_1$. Right: Pima Indian diabetes dataset with the operating context $\theta_2$.}
\label{fig:hull}
\end{figure}

Figure \ref{fig:hull} shows the convex hull for each of the three models. We can also see the regions of dominance. For diabetes, IBk dominates for high values of $\alpha$, while the decision tree dominates for low values of $\alpha$.

From here we can calculate the regions of dominance for $\alpha$ and choose the best model accordingly, in the very same way as in ROC analysis.

\section{Approximating the \JROC hull}\label{hull}

The previous procedure allows us to determine the best model and configuration given the operating condition. We only need to calculate where all the points lie, compute the convex hull and find the one that corresponds for each possible $\alpha$ in application time. 
While this looks easy to do, there is one big issue. As the number of attributes increase, the number of points for a model grows exponentially: a lattice for $m$ attributes has $2^m$ nodes. For instance, for a model with 16 attributes, we would have $2^{16} = 65536$ points. Navigating the complete lattice of attribute subsets, and calculating their expected $TC$ and $MC$ would be infeasible. So we need to explore some ways to reduce the number of configurations that are evaluated, while still having a good approximation of the \JROC hulls in order to do the correct decisions and get the optimal cost.

We will consider how to reduce the number of configurations from an exponential growth ($O(2^m)$), given by a {\em full} method, to a quadratic growth ($O(m^2)$). We consider four possible\footnote{There would also be the {\em forward} versions as well. We rule these possibilities out here for the simplicity of exposition, and also because we think that the results would be similar, but they could also be considered in practice.} methods:

\begin{itemize}
\item Backward $MC$-guided (BMC): we start with one case with the $m$ attributes, we evaluate with the $m$ cases removing one attribute, and choose the best one in terms of $MC$, then we evaluate the $m-1$ cases removing one attribute from the previous one, and so on. This leads to exactly $1 + (m) + (m-1) + (m-2) + \dots + 1$ $= m(m+1) / 2 + 1$, which has an order of $O(m^2)$.
\item Backward $TC$-guided (BTC): as BMC but using $TC$ instead. It has the same order and number of points.
\item Backward $JC$-guided (BJC): as BMC but using $JC$ instead. It has the same order and number of points.
\item Monte Carlo (RND): a random sample over the lattice. In order to make comparison fair, we will also consider the same number of elements.
\end{itemize}

\noindent It is easy to show that if the misclassification cost matrix is uniform, then BTC and BJC are equivalent. If the test cost vector then BMC and BJC are equivalent. If both the misclassification cost matrix and the cost vector are uniform (i.e.,  the uniform operating context $\theta_U$) then BMC, BTC and BJC are equivalent.

Figure \ref{fig:bmc} shows the results for the BMC method for our two datasets and operating contexts $\theta_1$ and $\theta_2$. 

\begin{figure}
\centering
\includegraphics[width=0.48\textwidth]{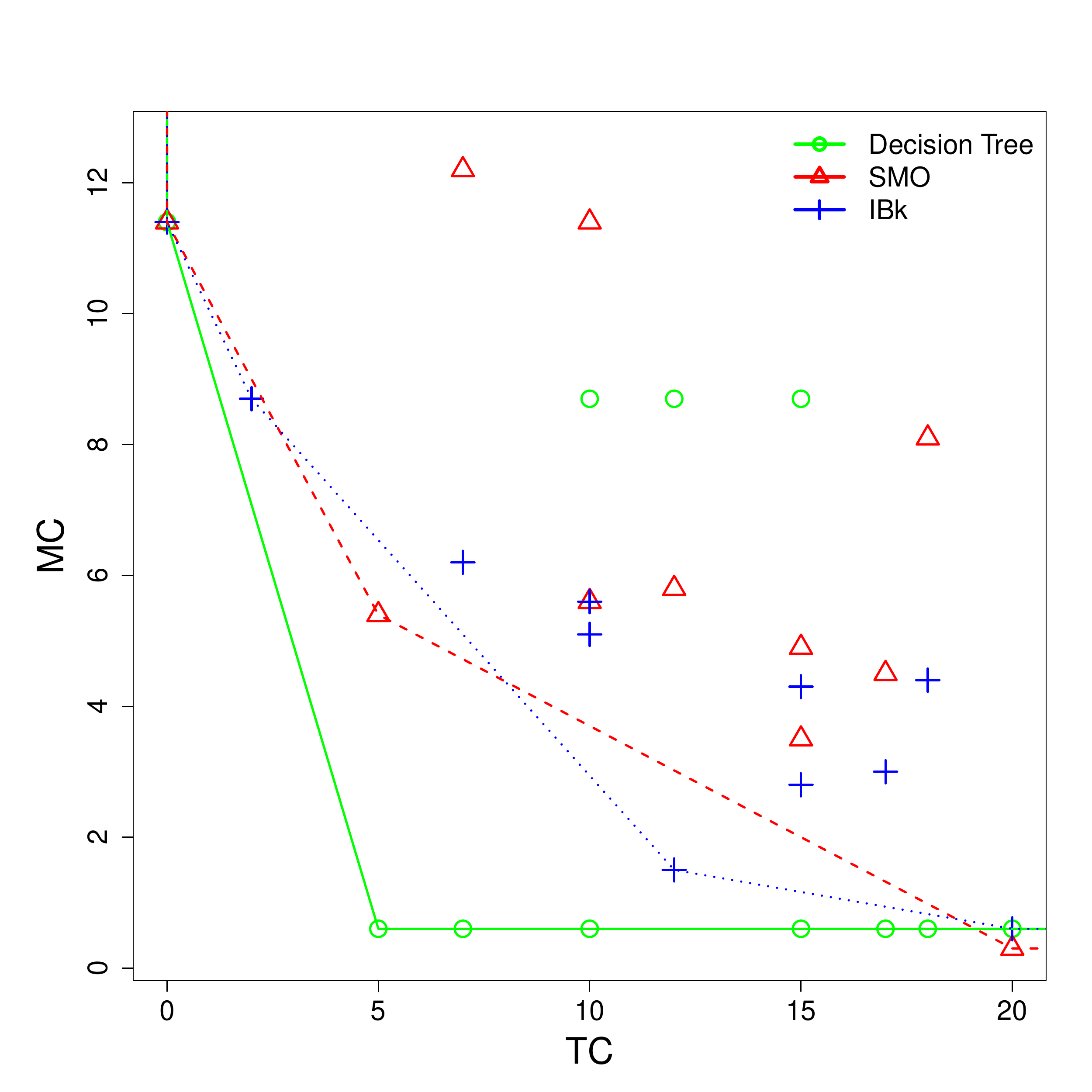} \hfill
\includegraphics[width=0.48\textwidth]{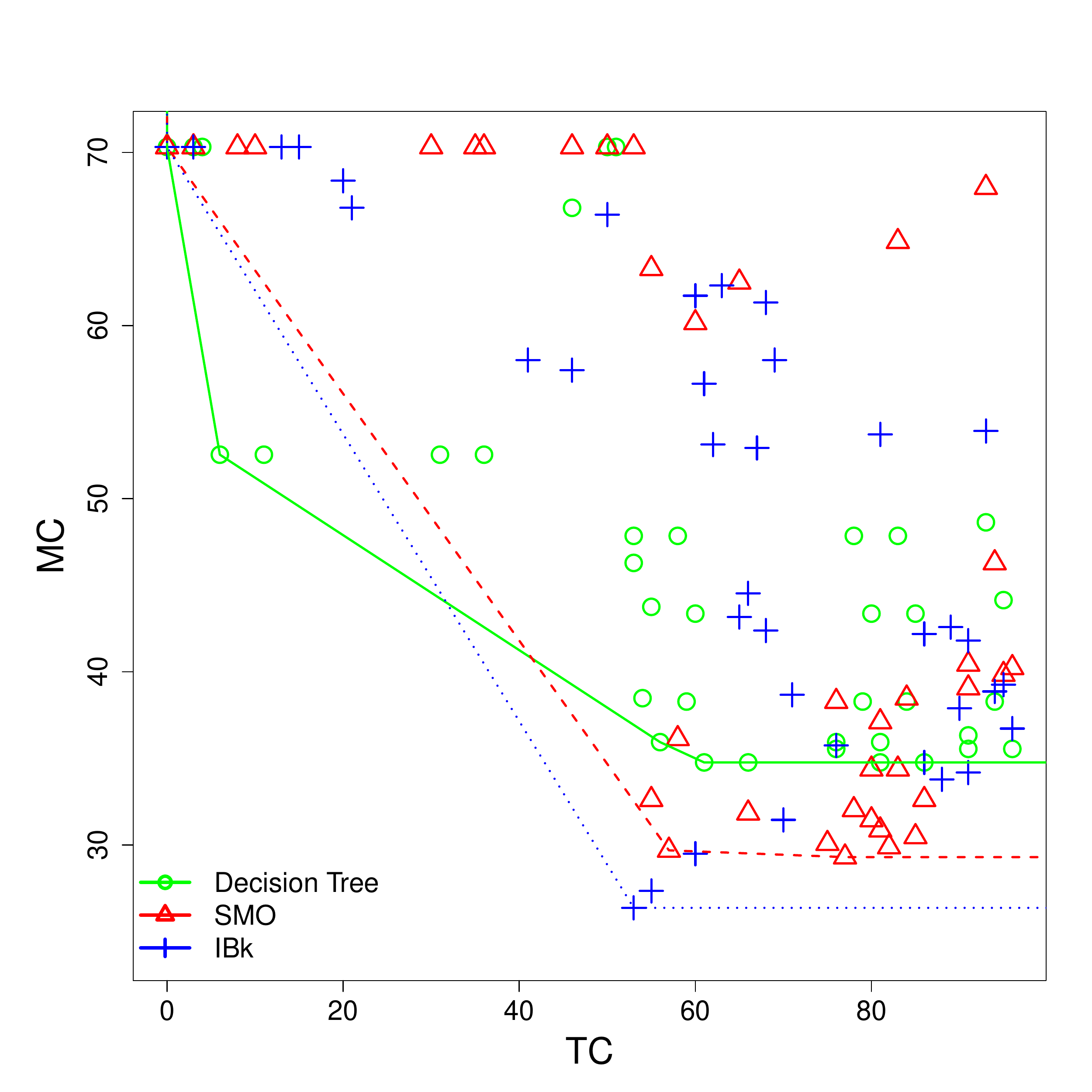} 
\caption{We show the points and the hull of the combinations selected by the method BMC. There are $m(m+1) / 2 + 1$ points for each model instead of $2^m$. Compare with Figure \ref{fig:hull}. Left: iris dataset with the operating context $\theta_1$. Right: Pima Indian diabetes dataset with the operating context $\theta_2$.}
\label{fig:bmc}
\end{figure}

Similarly, we have the results for the BTC, BJC and RND methods on figures \ref{fig:btc}, \ref{fig:bjc} and \ref{fig:rnd}.
If we compare with Figure \ref{fig:hull}, we see that the hulls are much worse for BTC, and notably worse for BJC and RND. 
However, we see that the results for BMC are good, almost identical to Figure \ref{fig:hull}. 
Does this observation hold in general? This (and other things) are explored in the experiments.

\begin{figure}
\centering
\includegraphics[width=0.48\textwidth]{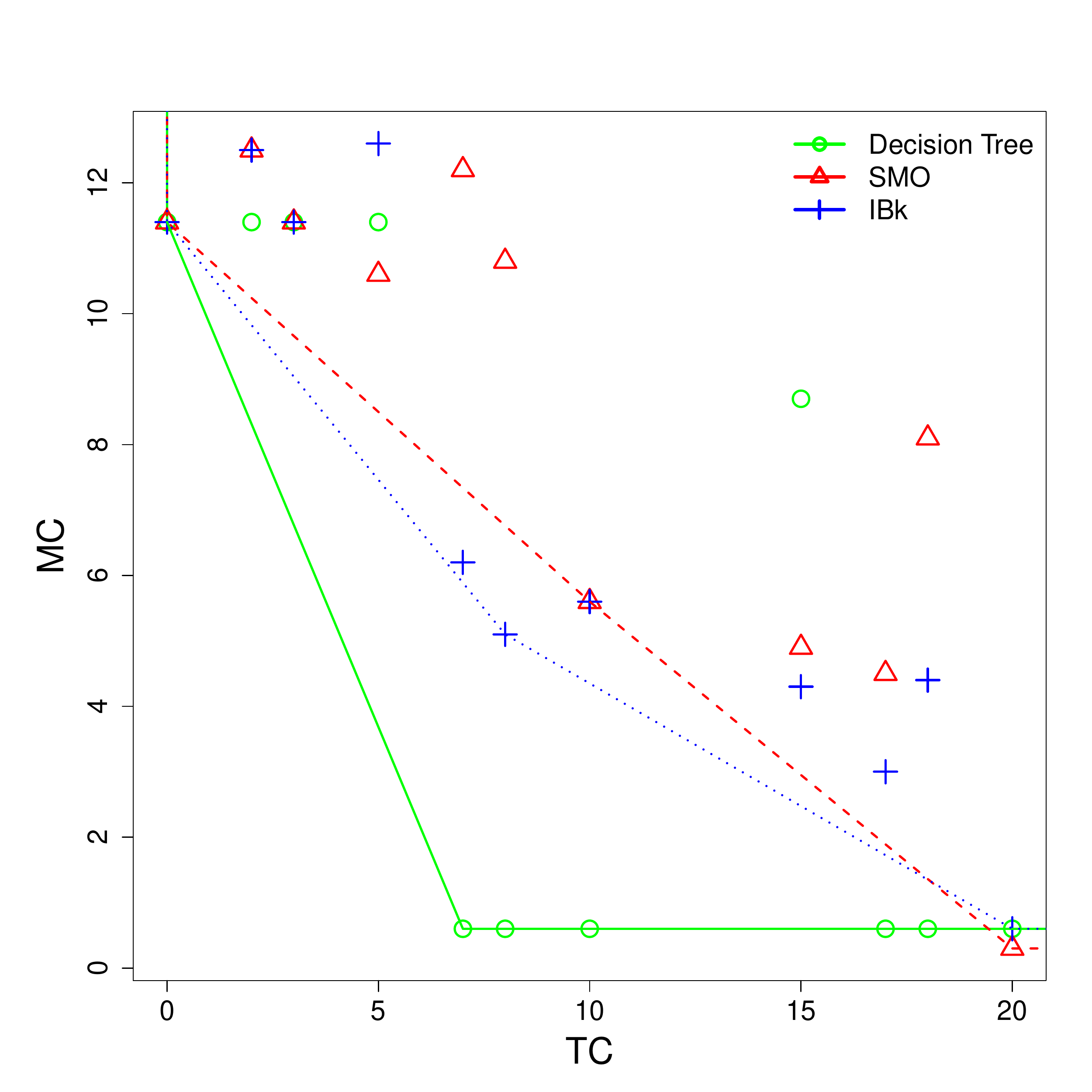} \hfill
\includegraphics[width=0.48\textwidth]{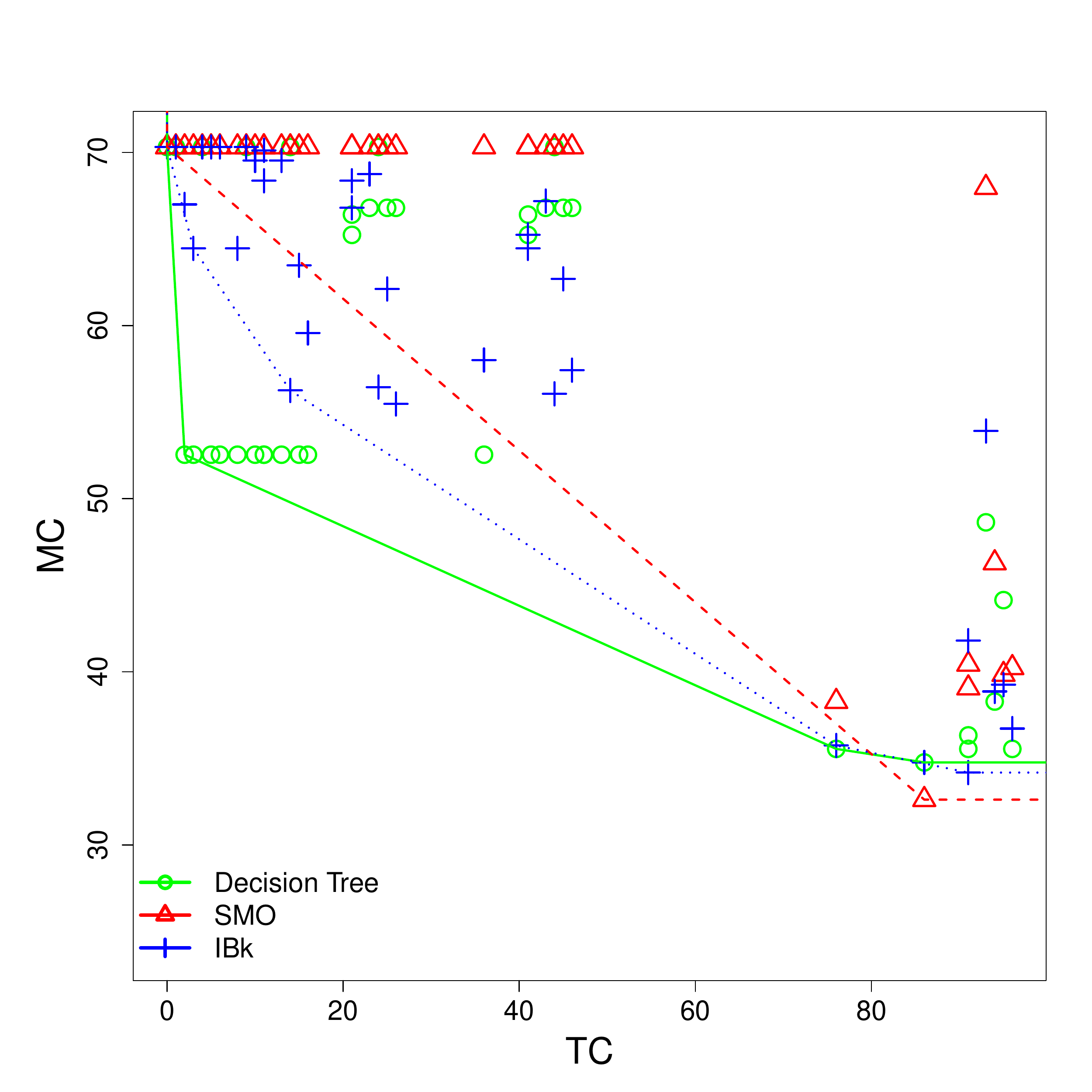} 
\caption{We show the points and the hull of the combinations selected by the method BTC. There are $m(m+1) / 2 + 1$ points for each model instead of $2^m$. Compare with Figure \ref{fig:hull}. Left: iris dataset with the operating context $\theta_1$. Right: Pima Indian diabetes dataset with the operating context $\theta_2$.}
\label{fig:btc}
\end{figure}

\begin{figure}
\centering
\includegraphics[width=0.48\textwidth]{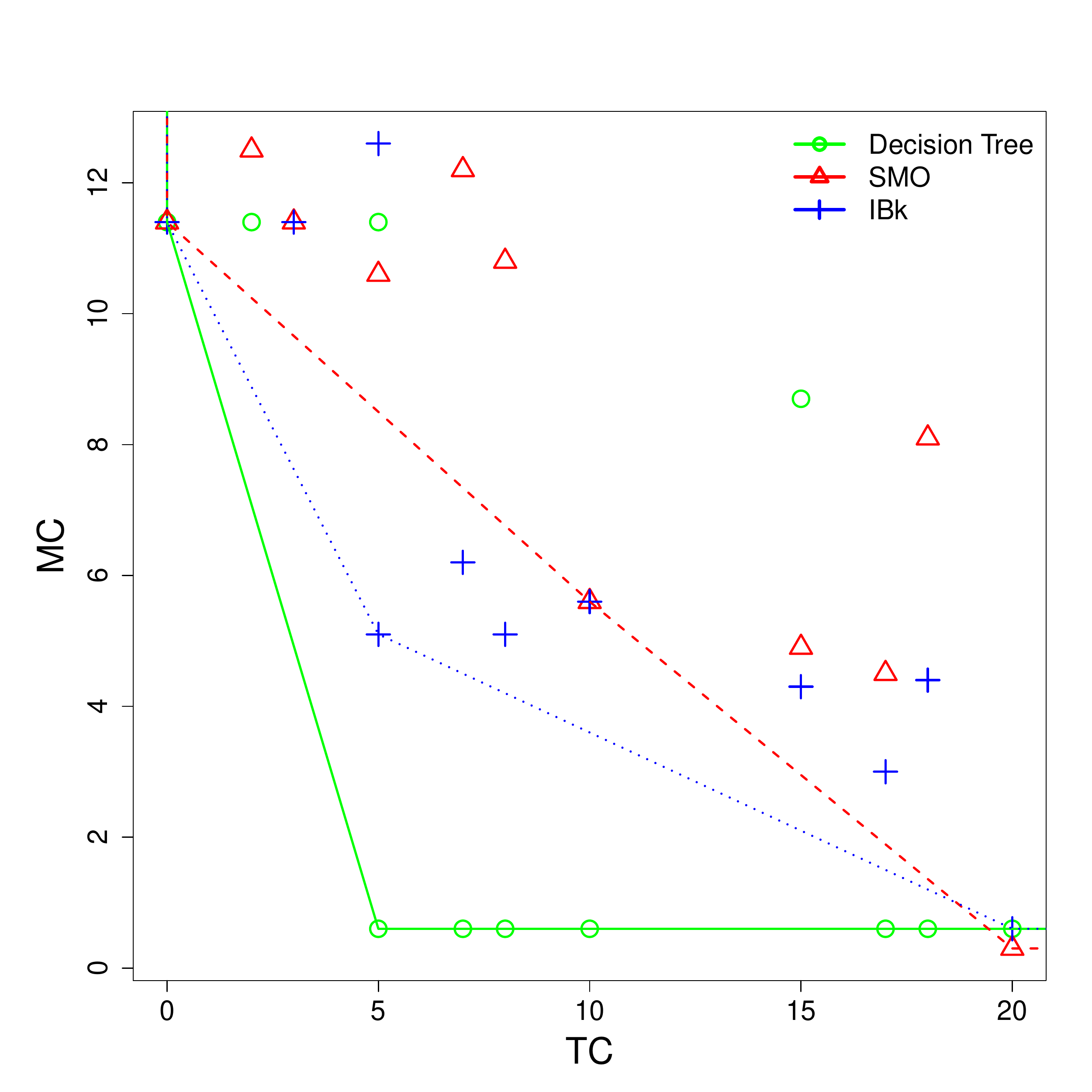} \hfill
\includegraphics[width=0.48\textwidth]{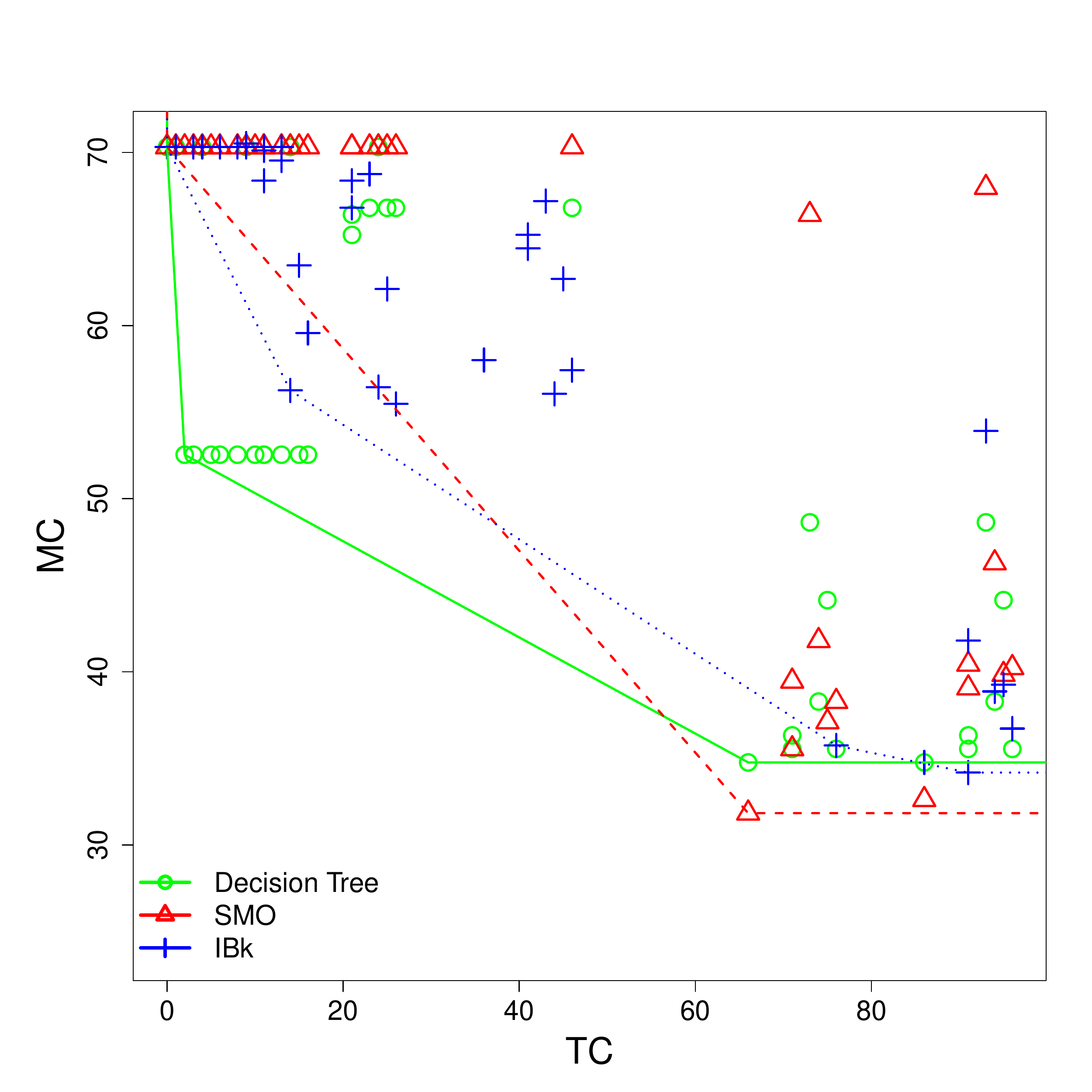} 
\caption{We show the points and the hull of the combinations selected by the method BJC. There are $m(m+1) / 2 + 1$ points for each model instead of $2^m$. Compare with Figure \ref{fig:hull}. Left: iris dataset with the operating context $\theta_1$. Right: Pima Indian diabetes dataset with the operating context $\theta_2$.}
\label{fig:bjc}
\end{figure}

\begin{figure}
\centering
\includegraphics[width=0.48\textwidth]{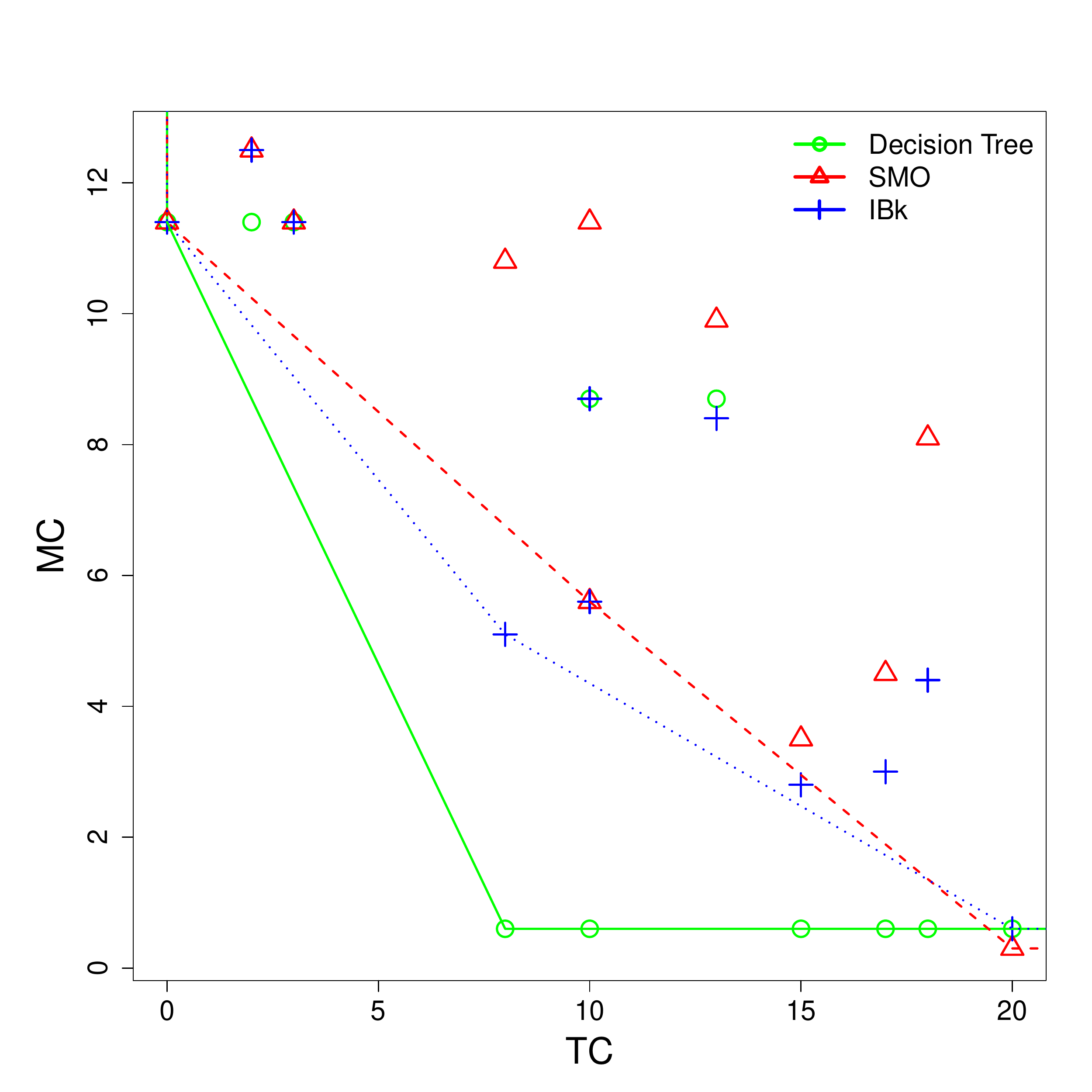} \hfill
\includegraphics[width=0.48\textwidth]{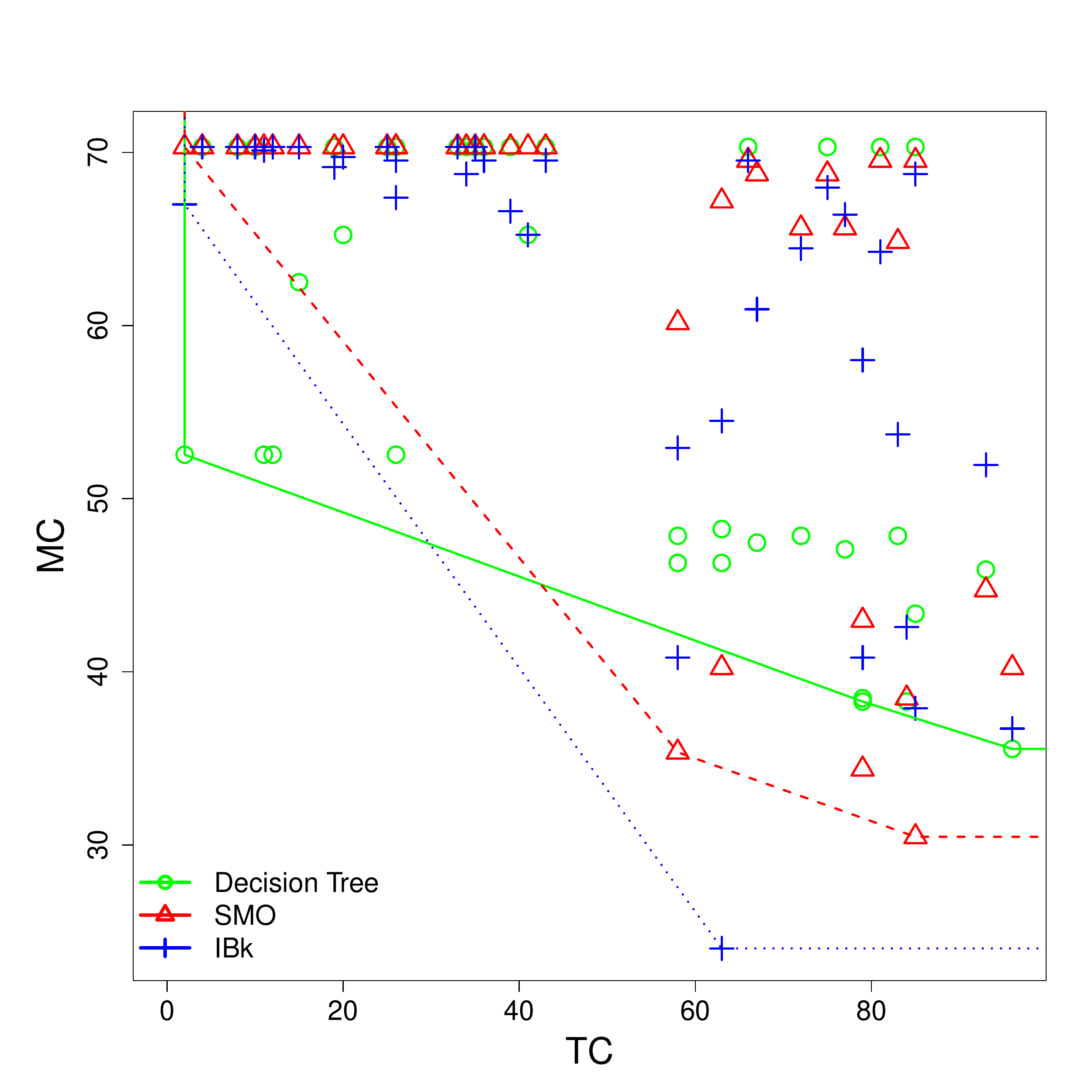} 
\caption{We show the points and the hull of the combinations selected by the method RND. There are $m(m+1) / 2 + 1$ points for each model instead of $2^m$. Compare with Figure \ref{fig:hull}. Left: iris dataset with the operating context $\theta_1$. Right: Pima Indian diabetes dataset with the operating context $\theta_2$.}
\label{fig:rnd}
\end{figure}

\section{Experiments}\label{experiments}

Now we are going to explore whether the \JROC plots are effective, and also whether their quadratic approximation suffers from a degradation.
In order to do that, we consider six datasets of the UCI repository, with number of attributes between 4 and 11, as shown in Table \ref{tab:datasets}. We could not use larger datasets in this first experiment because the Full method is too slow as the number of elements to explore in the lattice grows exponentially.

\begin{table}
\begin{center}
\begin{tabular}{clccc}
\#& Dataset name          & $m$& $n$ & $c$ \\ \hline
1 & iris          & 4  & 150 & 3 \\
2 & diabetes      & 8  & 768 & 2 \\
3 & balance-scale & 5  & 625 & 3 \\
4 & breast-w      & 11 & 320 & 2 \\
5 & breast-cancer & 10 & 286 & 2 \\
6 & glass         & 9  & 214 & 5 \\\hline
\end{tabular}
\caption{Description of the datasets used in the experiments.}\label{tab:datasets}
\end{center}
\end{table}

We consider two different contexts: a uniform context $\theta_u$ and a random context where each value of the misclassification cost matrix and test cost vector are obtained by multiplying the original value of the uniform context by $k$, where $k= e^{\beta \times (k_0-0.5)}$, $k_0$ is obtained as a random number between 0 and 1 using a uniform distribution, and $\beta$ is a factor of how irregular we want the vector and matrix to be. We set $\beta=10$ for the following experiments. Once this function is applied, the test cost vector $T$ and the misclassification cost matrix $M$ are normalised such that $\sum T = 1$ and $\sum M = c^2$.

For each dataset of size $n$, we split it into a work dataset ($2n/3$ of the data) and the remaining data ($n/3$) for test.
With the work dataset, we perform a split of the work dataset into two halves. 
We train four models (SMO, IBk, Adaboost with J48, Bagging with J48) with the first half of the data ($n/3$) and calculate all the points (i.e., $TC$ and $MC$) according to the full method, and the BMC, BTC, BJC and RND methods with the other half.
Next, we choose 5 values of $\alpha \in \{ 0.1, 0.3, 0.5, 0.7, 0.9 \}$ and determine the best configuration of model and feature subset for each of the five methods. We use the configurations given by the five methods (Full, BMC, BTC, BJC and RND) for the test set and calculate the $JC$. 

We repeat the experiment 4 times. This gives us, $4 \times 5= 20$ results for each of the 4 methods for each of the 6 datasets.

\subsection{Uniform context}

First we give the results for the uniform context. Table \ref{tab:results} shows the mean and standard deviation of the results for each dataset and method. We see that Full cannot be improved by the other methods, as it explores all the possibilities. In general, the RND method is worse than the backward methods, except for dataset 1 (the smaller one, iris, where the number of explored configurations is $4 \times 5 + 1 = 11$ in front of a total of 16, which is not a big difference). In fact, for the big datasets, where the difference in explored configuration grows exponentially, we see that the backward methods get close to the Full methods, which gives support to these approximation.

\begin{table}
\centering
\begin{tabular}{rlllll}
  \hline
Dataset & Full & BMC & BTC & BJC & RND \\ \hline
  1 & 0.166 $\pm$ 0.0713 & 0.222 $\pm$ 0.105 & 0.207 $\pm$ 0.0909 & 0.222 $\pm$ 0.105 & 0.181 $\pm$ 0.0779 \\ 
  2 & 0.124 $\pm$ 0.0358 & 0.138 $\pm$ 0.0487 & 0.132 $\pm$ 0.0436 & 0.138 $\pm$ 0.0487 & 0.192 $\pm$ 0.0429 \\ 
  3 & 0.281 $\pm$ 0.156 & 0.286 $\pm$ 0.158 & 0.29 $\pm$ 0.162 & 0.286 $\pm$ 0.158 & 0.344 $\pm$ 0.122 \\ 
  4 & 0.289 $\pm$ 0.138 & 0.295 $\pm$ 0.142 & 0.293 $\pm$ 0.142 & 0.295 $\pm$ 0.142 & 0.358 $\pm$ 0.097 \\ 
  5 & 0.303 $\pm$ 0.123 & 0.328 $\pm$ 0.122 & 0.333 $\pm$ 0.13 & 0.328 $\pm$ 0.122 & 0.374 $\pm$ 0.0943 \\ 
  6 & 0.275 $\pm$ 0.129 & 0.28 $\pm$ 0.132 & 0.278 $\pm$ 0.131 & 0.28 $\pm$ 0.132 & 0.297 $\pm$ 0.126 \\ 
   \hline
\end{tabular}
\caption{JC results (mean and standard deviation) of the 20 results (5 values of alpha with 4 repetitions) for each of the 5 methods (columns: Full, BMC, BTC, BJC, RND) and each of the 6 datasets (rows: 1 to 6), with the uniform context.}\label{tab:results} 
\end{table}

Table \ref{tab:results2} shows the results aggregated for all datasets but showing each value of $\alpha$. This means ($8$ datasets with $10$ repetitions). In this case, we can see that the backward methods are consistently better than the RND method and are reasonable close to the Full method. The influence of $\alpha$ is not particularly clear, the approximation is similar for all of them.

\begin{table}
\centering
\begin{tabular}{rlllll}
  \hline
$\alpha$ & Full & BMC & BTC & BJC & RND \\ \hline
  0.1 & 0.0776 $\pm$ 0.0175 & 0.082 $\pm$ 0.0219 & 0.082 $\pm$ 0.0219 & 0.082 $\pm$ 0.0219 & 0.187 $\pm$ 0.0726 \\ 
  0.3 & 0.204 $\pm$ 0.0366 & 0.233 $\pm$ 0.0476 & 0.22 $\pm$ 0.0482 & 0.233 $\pm$ 0.0476 & 0.259 $\pm$ 0.0696 \\ 
  0.5 & 0.294 $\pm$ 0.0914 & 0.329 $\pm$ 0.0919 & 0.319 $\pm$ 0.0867 & 0.329 $\pm$ 0.0919 & 0.335 $\pm$ 0.085 \\ 
  0.7 & 0.33 $\pm$ 0.121 & 0.345 $\pm$ 0.115 & 0.352 $\pm$ 0.122 & 0.345 $\pm$ 0.115 & 0.359 $\pm$ 0.122 \\ 
  0.9 & 0.292 $\pm$ 0.155 & 0.302 $\pm$ 0.15 & 0.305 $\pm$ 0.154 & 0.302 $\pm$ 0.15 & 0.315 $\pm$ 0.163 \\ 
   \hline
\end{tabular}
\caption{JC results (mean and standard deviation) of the 24 results
(6 datasets with 4 repetitions) 
for each of the 5 methods (columns: Full, BMC, BTC, BJC, RND) and each of the 5 possible values of $\alpha$ (rows 0.1 to 0.9), with the uniform context.}\label{tab:results2}
\end{table}

Finally, we want to see the whole picture and perform a statistical test.
In order to assess the significance of the experimental results we will use a custom procedure, following \cite{japkowicz2011evaluating} and \cite[ch.12]{petersbook}, which in turn is mostly based on \cite{demsar2006statistical}. Since we will not have any baseline method, we will use a Friedman test to tell whether the difference between several methods is significant and then we will apply the Nemenyi post-hoc test. We agree with \cite{garcia2008extension} that the Nemenyi test is a ``very conservative procedure and many of the obvious differences may not be detected", but we prefer to be conservative given our experimental setting and the use of a 0.95 confidence level. In some result tables we will show the means (even though in many cases they are not commensurate) and in some other tables we will show the average ranks (from which the Friedman and Nemenyi tests are calculated). We will also include the critical difference for the Nemenyi test, so we will be able to simply tell whether the difference between two algorithms is significant if the difference between their average ranks is greater than the critical difference.

Table \ref{tab:results3} shows the results of several data, where we are particularly interested in knowing which of the three backward methods is best. As we can see, the three methods behave almost equally. In fact, BMC and BJC are exactly equal, which is a consequence of the use of a uniform context.

\begin{table}
\centering
\begin{tabular}{rrrrrr}
  \hline
 & Full & BMC & BTC & BJC & RND \\ 
  \hline
1 & 0.0953 & 0.1058 & 0.1058 & 0.1058 & 0.1388 \\ 
  2 & 0.2122 & 0.2767 & 0.2642 & 0.2767 & 0.2122 \\ 
  3 & 0.2250 & 0.3362 & 0.3013 & 0.3362 & 0.2562 \\ 
  4 & 0.2047 & 0.2715 & 0.2528 & 0.2715 & 0.2047 \\ 
  5 & 0.0905 & 0.1217 & 0.1092 & 0.1217 & 0.0915 \\ 
  6 & 0.0674 & 0.0674 & 0.0674 & 0.0674 & 0.2220 \\ 
  7 & 0.1532 & 0.2001 & 0.1581 & 0.2001 & 0.2023 \\ 
  8 & 0.1504 & 0.1649 & 0.1660 & 0.1649 & 0.2146 \\ 
  9 & 0.1340 & 0.1340 & 0.1340 & 0.1340 & 0.1834 \\ 
  10 & 0.1169 & 0.1238 & 0.1338 & 0.1238 & 0.1397 \\ 
  11 & 0.0542 & 0.0542 & 0.0542 & 0.0542 & 0.2028 \\ 
  12 & 0.1797 & 0.1797 & 0.1797 & 0.1797 & 0.2380 \\ 
  13 & 0.3125 & 0.3264 & 0.3264 & 0.3264 & 0.3595 \\ 
  14 & 0.4002 & 0.4036 & 0.4068 & 0.4036 & 0.4429 \\ 
  15 & 0.4571 & 0.4649 & 0.4807 & 0.4649 & 0.4774 \\ 
  16 & 0.0738 & 0.0738 & 0.0738 & 0.0738 & 0.2090 \\ 
  17 & 0.2074 & 0.2074 & 0.2074 & 0.2074 & 0.3211 \\ 
  18 & 0.3262 & 0.3457 & 0.3311 & 0.3457 & 0.3779 \\ 
  19 & 0.4186 & 0.4277 & 0.4295 & 0.4277 & 0.4416 \\ 
  20 & 0.4168 & 0.4213 & 0.4217 & 0.4213 & 0.4393 \\ 
  21 & 0.0975 & 0.1117 & 0.1117 & 0.1117 & 0.2225 \\ 
  22 & 0.2419 & 0.3001 & 0.2790 & 0.3001 & 0.3511 \\ 
  23 & 0.3736 & 0.4035 & 0.3958 & 0.4035 & 0.4188 \\ 
  24 & 0.3904 & 0.3988 & 0.4521 & 0.3988 & 0.4321 \\ 
  25 & 0.4137 & 0.4254 & 0.4273 & 0.4254 & 0.4454 \\ 
  26 & 0.0775 & 0.0793 & 0.0793 & 0.0793 & 0.1282 \\ 
  27 & 0.2309 & 0.2309 & 0.2309 & 0.2309 & 0.2309 \\ 
  28 & 0.3792 & 0.3980 & 0.3916 & 0.3980 & 0.3829 \\ 
  29 & 0.4346 & 0.4346 & 0.4346 & 0.4346 & 0.4475 \\ 
  30 & 0.2550 & 0.2550 & 0.2550 & 0.2550 & 0.2974 \\  \hline
  Avg & 0.2397 & 0.2581 & 0.2554 & 0.2581 & 0.2911 \\  \hline
  AR & 1.5000 & 3.0500 & 3.0667 & 3.0500 & 4.3333 \\ 
   \hline
\end{tabular}
\caption{This figure shows the JC means for the 4 repetitions for each of the 5 methods (Full, BMC, BTC, BJC, RND). The 30 rows are given by 6 datasets and 5 possible values of $\alpha$ with the uniform context. The `Avg' row shows the averages of the first 30 rows. Finally, the `AR' row shows the average rank for each method. With these ranks the Friedman test is applied and gives a Friedman statistic of 62.51 which is greather than the critical value of 10.97. Consequently, the null hypothesis is rejected (significance level: 0.05) and we conclude that the methods do not perform equally. In order to see which methods are significantly different from the rest, we look at the critical difference for the Nemenyi post-hoc test, which is 0.2965. This means that the Full method is statistically better than the rest, and that the RND method is statistically worse than the rest, but there is no statistically significant difference between the three methods BMC, BTC and BJC.}\label{tab:results3}
\end{table}

\subsection{Variable context}

In order to see what happens in a more realistic situation, let us see the results for the non-uniform context. Table \ref{tab:results-UR} shows the mean and standard deviation of the results for each dataset and method. Here we see that not all backward methods are equivalently, but interestingly we see that BMC is now consistently better than RND for all datasets.

\begin{table}
\centering
\begin{tabular}{rlllll}
  \hline
Dataset & Full & BMC & BTC & BJC & RND \\ \hline
  1 & 0.0079 $\pm$ 0.011 & 0.011 $\pm$ 0.013 & 0.015 $\pm$ 0.025 & 0.018 $\pm$ 0.028 & 0.017 $\pm$ 0.04 \\ 
  2 & 0.016 $\pm$ 0.016 & 0.023 $\pm$ 0.024 & 0.016 $\pm$ 0.017 & 0.025 $\pm$ 0.033 & 0.032 $\pm$ 0.033 \\ 
  3 & 0.037 $\pm$ 0.035 & 0.039 $\pm$ 0.035 & 0.037 $\pm$ 0.035 & 0.039 $\pm$ 0.036 & 0.045 $\pm$ 0.044 \\ 
  4 & 0.045 $\pm$ 0.049 & 0.052 $\pm$ 0.057 & 0.05 $\pm$ 0.057 & 0.052 $\pm$ 0.057 & 0.058 $\pm$ 0.057 \\ 
  5 & 0.0026 $\pm$ 0.0036 & 0.004 $\pm$ 0.0056 & 0.0033 $\pm$ 0.0037 & 0.0034 $\pm$ 0.004 & 0.0048 $\pm$ 0.0046 \\ 
  6 & 0.012 $\pm$ 0.017 & 0.02 $\pm$ 0.028 & 0.024 $\pm$ 0.041 & 0.024 $\pm$ 0.041 & 0.023 $\pm$ 0.03 \\ 
   \hline
\end{tabular}
\caption{JC results (mean and standard deviation) of the 20 results (5 values of alpha with 4 repetitions) for each of the 5 methods (columns: Full, BMC, BTC, BJC, RND) and each of the 6 datasets (rows: 1 to 6), with the variable context.}\label{tab:results-UR} 
\end{table}

Again, Table \ref{tab:results2-UR} shows the results aggregated for all datasets but showing each value of $\alpha$. This means ($8$ datasets with $10$ repetitions). Now we see that the results are not especially different according to $\alpha$.

\begin{table}
\centering
\begin{tabular}{rlllll}
  \hline
$\alpha$ & Full & BMC & BTC & BJC & RND \\ \hline
  0.1 & 0.0055 $\pm$ 0.0064 & 0.0081 $\pm$ 0.01 & 0.006 $\pm$ 0.0067 & 0.0061 $\pm$ 0.0067 & 0.012 $\pm$ 0.021 \\ 
  0.3 & 0.014 $\pm$ 0.021 & 0.018 $\pm$ 0.028 & 0.015 $\pm$ 0.023 & 0.022 $\pm$ 0.034 & 0.024 $\pm$ 0.032 \\ 
  0.5 & 0.025 $\pm$ 0.034 & 0.037 $\pm$ 0.047 & 0.04 $\pm$ 0.054 & 0.042 $\pm$ 0.055 & 0.046 $\pm$ 0.06 \\ 
  0.7 & 0.022 $\pm$ 0.027 & 0.025 $\pm$ 0.028 & 0.024 $\pm$ 0.028 & 0.025 $\pm$ 0.028 & 0.031 $\pm$ 0.032 \\ 
  0.9 & 0.033 $\pm$ 0.044 & 0.036 $\pm$ 0.044 & 0.035 $\pm$ 0.044 & 0.039 $\pm$ 0.045 & 0.038 $\pm$ 0.045 \\ 
   \hline
\end{tabular}
\caption{JC results (mean and standard deviation) of the 24 results
(6 datasets with 4 repetitions) 
for each of the 5 methods (columns: Full, BMC, BTC, BJC, RND) and each of the 5 possible values of $\alpha$ (rows 0.1 to 0.9), with the variable context.}\label{tab:results2-UR}
\end{table}

Finally, if we look at the whole picture and using a statistical test, we see in Table \ref{tab:results3-UR} that the backward methods are better than the RND method, but now we find difference between them. In fact, BTC is significantly better than BMC and BJC.

\begin{table}
\centering
\begin{tabular}{rrrrrr}
  \hline
 & Full & BMC & BTC & BJC & RND \\ 
  \hline
1 & 0.0023 & 0.0025 & 0.0025 & 0.0025 & 0.0025 \\ 
  2 & 0.0071 & 0.0071 & 0.0073 & 0.0073 & 0.0100 \\ 
  3 & 0.0160 & 0.0258 & 0.0457 & 0.0457 & 0.0507 \\ 
  4 & 0.0085 & 0.0114 & 0.0117 & 0.0117 & 0.0087 \\ 
  5 & 0.0054 & 0.0057 & 0.0057 & 0.0210 & 0.0124 \\ 
  6 & 0.0078 & 0.0205 & 0.0078 & 0.0082 & 0.0338 \\ 
  7 & 0.0138 & 0.0306 & 0.0138 & 0.0538 & 0.0482 \\ 
  8 & 0.0141 & 0.0191 & 0.0145 & 0.0157 & 0.0267 \\ 
  9 & 0.0171 & 0.0185 & 0.0171 & 0.0180 & 0.0258 \\ 
  10 & 0.0248 & 0.0279 & 0.0254 & 0.0277 & 0.0267 \\ 
  11 & 0.0116 & 0.0129 & 0.0129 & 0.0129 & 0.0171 \\ 
  12 & 0.0358 & 0.0361 & 0.0358 & 0.0358 & 0.0398 \\ 
  13 & 0.0594 & 0.0611 & 0.0594 & 0.0606 & 0.0772 \\ 
  14 & 0.0563 & 0.0585 & 0.0580 & 0.0585 & 0.0665 \\ 
  15 & 0.0206 & 0.0240 & 0.0207 & 0.0254 & 0.0241 \\ 
  16 & 0.0098 & 0.0098 & 0.0098 & 0.0098 & 0.0156 \\ 
  17 & 0.0243 & 0.0299 & 0.0290 & 0.0274 & 0.0343 \\ 
  18 & 0.0481 & 0.0728 & 0.0645 & 0.0728 & 0.0804 \\ 
  19 & 0.0253 & 0.0253 & 0.0253 & 0.0253 & 0.0310 \\ 
  20 & 0.1191 & 0.1220 & 0.1219 & 0.1223 & 0.1264 \\ 
  21 & 0.0003 & 0.0006 & 0.0003 & 0.0005 & 0.0010 \\ 
  22 & 0.0012 & 0.0026 & 0.0025 & 0.0026 & 0.0065 \\ 
  23 & 0.0013 & 0.0028 & 0.0037 & 0.0023 & 0.0039 \\ 
  24 & 0.0037 & 0.0054 & 0.0039 & 0.0052 & 0.0047 \\ 
  25 & 0.0062 & 0.0084 & 0.0063 & 0.0065 & 0.0078 \\ 
  26 & 0.0014 & 0.0026 & 0.0026 & 0.0026 & 0.0014 \\ 
  27 & 0.0025 & 0.0035 & 0.0035 & 0.0035 & 0.0025 \\ 
  28 & 0.0123 & 0.0387 & 0.0548 & 0.0548 & 0.0352 \\ 
  29 & 0.0225 & 0.0278 & 0.0288 & 0.0294 & 0.0474 \\ 
  30 & 0.0216 & 0.0270 & 0.0283 & 0.0283 & 0.0308 \\ \hline
  Avg & 0.0200 & 0.0247 & 0.0241 & 0.0266 & 0.0300 \\  \hline
  AR & 1.2333 & 3.4000 & 2.6500 & 3.5167 & 4.2000 \\ 
   \hline
\end{tabular}
\caption{This figure shows the JC means for the 4 repetitions for each of the 5 methods (Full, BMC, BTC, BJC, RND). The 30 rows are given by 6 datasets and 5 possible values of $\alpha$ with the variable context. The `Avg' row shows the averages of the first 30 rows. Finally, the `AR' row shows the average rank for each method. With these ranks the Friedman test is applied and gives a Friedman statistic of 67.27 which is greater than the critical value of 10.97. Consequently, the null hypothesis is rejected (significance level: 0.05) and we conclude that the methods do not perform equally. In order to see which methods are significantly different from the rest, we look at the critical difference for the Nemenyi post-hoc test, which is 0.2965. This means that the Full method is statistically better than the rest, and that the RND method is statistically worse than the rest. In this case, we see that the BTC method is significantly better than BMC and BJC.}\label{tab:results3-UR}
\end{table}

Although some more definitive conclusions of which method is best in general would require more datasets and repetitions (although the results are significant enough here), these experiments show the potential of the backward methods.

\section{Conclusion}\label{conclusion}

In the introduction we argued that we were looking for a flexible approach that could be used in a variety of circumstances. In fact, we were motivated by  the following considerations:

\begin{enumerate}
\item The method must work for any kind of predictive model, either human-made or trained from data using any off-the-shelf predictive modelling technique. \label{enu:any}
\item Each example may have a different subset of missing values. \label{enu:instances}
\item Retraining the model for each example (using a subset of the examples with similar feature subsets) is not an option (because of \ref{enu:instances} above or other reasons).
\item Both misclassification cost (MC) and test cost (TC) must be considered.
\end{enumerate}

\noindent We have presented some graphical tools and an optimisation method that meets these requirements. In fact, this has to be compared to the usual approach which is specific on decision trees, with several approaches according to \cite{zhang2005missing,lomax2013survey}: 
(a) KV, a tree is rebuilt when missing values are found, (b) Null strategy: replace by an extra label (model is not rebuilt), (c) Internal node: creates nodes for examples with missing values (model is not rebuilt), and (d) C4.5 strategy: probabilistic approach (model is not rebuilt). Option (a) is infeasible if the situation \ref{enu:instances} holds. 

All the above options are specific to decision trees, so they are not able to take advantage of many other off-the-shelf techniques of our preferred data mining suite or machine learning library. This is an important limitation as many of the most powerful machine learning techniques used today, such as ensemble methods (using or not decision trees as base classifiers), support vector machines, etc., are much more difficult to adapt for minimising test costs.

The take-away message of this paper is that we can use any machine learning technique, train a model on a dataset with the available attributes and possibly containing missing values, and {\em reframe} it for a different deployment context where we have fewer available attributes, a different distribution of missing values, a different misclassification matrix and test cost vector. While the Full approach is intractable in general, we have introduced some approximations that are just quadratic, which are feasible for hundreds of attributes, which is already a high number of attributes if we are considering test costs. Also, during all the process we can explore the performance of several models using \JROC curves. In fact, these curves are not specific for the methods we have introduced here; they could be used for the traditional methods used for decision trees or for the analysis of any cost context considering both $MC$ and $TC$.

This work opens many new avenues of future work. For instance, in section \ref{reframing} we discuss that an alternative to the use of missing values is the use of ranges (see bullet \ref{item-range}). The approach presented here could also be compared or explored in combination to the mimetic technique to get models that use fewer attributes \cite{ensemble2002,mimetic2003,Blanco-VegaHR04,icmlaBlanco-VegaHR05,blancoestimating}. Another interesting idea would be the problem of quantification with test costs, which could be applied to both classification and regression \cite{bella2010quantification,bella2013aggregative}. 
We have also been suggested\footnote{Peter Flach, personal communication} to use decision stump ensembles, where the elements in 
the ensemble could be pruned a posteriori when the test cost is known.

More comparison with the area of feature selection could lead to a better understanding of the possibilities of reframing and better methods. For instance, the use of the attribute correlation can be used to an approximate notion of dominance (e.g., if two attributes have high correlation, the cost is expected to be related to the lowest test cost for any of them). As for the relation to other problems, we could also consider that the output domain may be null, as in abstaining classifiers \cite{ferri2004cautious,Pietraszek2005} and the notion of delegation \cite{FerriFH04} could be applied to this case. In fact, a missing value on purpose can be seen as the parallel of a reject option or abstention for the output value.

The notion of \JROC curve could be further explored and extended. For instance, we could figure out other ways of drawing these curves, by using attribute correlation or some other order on the attributes. The issue of representing operating conditions when the the matrix and vector are not fixed could lead to more dimensions, or the inclusion of the cost matrix. At least in the case of binary classifiers we could have 3D surfaces, using, e.g., $TPR$, $FPR$ and $TC$. 
As for any curve representing cost for each operating condition, we wonder whether the area over the \JROC curve means something, as in ROC analysis \cite{Bradley1997,ICML11CoherentAUC}. Also we could ask the question of whether we can draw cost plots as in \cite{drummond-and-Holte2006,ICML11Brier,ROCandCost}.

Finally, there are more more ambitious ideas. We could investigate which attributes to use for each example. We could use reliability measures (especially in probabilistic classifiers) to make better decisions on whether to remove an attribute or not. We could analyse what to do when new attributes appear, using, e.g., the correlation to other attributes to derive the old attributes, or thinking about more general ways of representing the feature space. Finally, we think that there is no reason why most of the ideas introduced here could not work equally well for regression, combining the test cost with any regression loss.


\section*{Acknowledgements}

{\small This work was supported by the MEC/MINECO projects CONSOLIDER-INGENIO CSD2007-00022 and TIN 2010-21062-C02-02, GVA project Prometeo/2008/051, the COST - European Cooperation in the field of Scientific and Technical Research IC0801 AT, and the REFRAME project granted by the European Coordinated Research on Long-term Challenges in Information and Communication Sciences \& Technologies ERA-Net (CHIST-ERA), and funded by the respective national research councils and ministries.}

\bibliography{biblio}

\end{document}